\tikzstyle{startstop} = [rectangle, semithick, minimum width=0.5cm, minimum height=0.5cm,text centered, draw=black]
\tikzstyle{chance} = [circle, semithick, minimum width=0.5cm, minimum height=0.5cm, text centered, draw=black]
\tikzstyle{decision} = [rectangle, semithick, minimum width=0.5cm, minimum height=0.5cm, text centered, draw=black]
\tikzstyle{end} = [minimum width=0.5cm, minimum height=0.5cm, text centered, draw=white]
\tikzstyle{arrow} = [thick,->,>=latex]
\tikzset{
	commutative diagrams/.cd, 
	arrow style=tikz, 
	diagrams={>=stealth}
}
\newcommand{\betterthan}{\succ} 
\newcommand{\atleastasgoodas}{\succsim}
\newcommand{\ooutcome}{o}
\newif\ifanon
\title{Will artificial agents pursue power by default?}
\author{}
\date{}
\author{Christian Tarsney}
\date{June 2025}
\begin{document}
	\maketitle
	
	\begin{abstract}
		Researchers worried about catastrophic risks from advanced AI have argued that we should expect sufficiently capable AI agents to pursue power over humanity because power is a \emph{convergent instrumental goal}, something that is useful for a wide range of final goals. Others have recently expressed skepticism of these claims. This paper aims to formalize the concepts of instrumental convergence and power-seeking in an abstract, decision-theoretic framework, and to assess the claim that power is a convergent instrumental goal. I conclude that this claim contains at least an element of truth, but might turn out to have limited predictive utility, since an agent’s options cannot always be ranked in terms of power in the absence of substantive information about the agent’s final goals. However, the fact of instrumental convergence is more predictive for agents who have a good shot at attaining absolute or near-absolute power.

		\vspace{\baselineskip}
		
		\textit{Keywords} AI alignment, AI safety, AI decision theory, instrumental convergence, power-seeking, orthogonality thesis 
		
	\end{abstract}

	\section{Introduction}
	
	With the rapid progress in artificial intelligence (AI) capabilities over the past several years, worries about catastrophic risks from advanced AI have become increasingly prominent.\footnote{For recent surveys of these concerns, see \cite{hendrycks2023overview} and \cite{bales2024artificial}.} Many of these worries (though by no means all) focus on artificial \emph{agents}: very roughly, AI systems that have goals in the external world, beliefs about the world, and behave in whatever ways they believe will best achieve their goals.

	It's a matter of debate, and not at all obvious, whether any AI system with sufficient cognitive capabilities \emph{must} be agentic in this sense. 
	But it does seem likely that, as AI capabilities advance, we will \emph{choose} to create increasingly agentic systems. For instance, present-day large language models (LLMs) aren't particularly agentic on their own, but can be embedded within larger ``language agent'' systems that behave agentically: a human user supplies an objective in natural language, the LLM develops a plan to achieve that objective, and that plan is automatically implemented, with updates based on feedback from the environment.\footnote{See for instance \cite{park2023generative}, \cite{wang2023voyager}. For general discussion of language agents and their implications for AI catastrophic risk, see \cite{goldstein2023language}.} AI systems that can carry out complex plans autonomously can substitute for human labor, and perhaps eventually perform many tasks better than human workers, giving them potentially enormous economic value.

	The \emph{alignment problem} is, roughly, the problem of making sure that artificial agents want what we want them to want \citep{gabriel2020artificial,ngo2023alignment}. The more capable and widely deployed artificial agents become, the more the world will be influenced by their activities, and the more important it will be that they are pursuing the right goals.
	
	Many researchers believe that aligning AI agents with human-level or greater capabilities will be extremely difficult, and that the creation of such agents therefore carries catastrophic risks.\footnote{See for instance \cite{bostrom2014superintelligence}, especially the discussion of ``motivation selection methods'' in Ch.\ 9; \cite{carlsmith2022powerseeking}, especially \S4; \citet[\S 5]{hendrycks2023overview}; \cite{ngo2023alignment}; \citet[\S 2]{bales2024artificial}; \citeauthor{cappelenFCsurvival} (forthcoming, \S3).} Two claims that often underlie these worries are that (i) there are some \emph{default tendencies} for artificial agents to want things that we don't want them to want, and (ii) it's at least difficult to ensure before deployment that we've eliminated all those undesirable default tendencies from an AI agent.

	My focus in this paper will be on the first of those claims---more specifically, on the idea that AI agents will end up with undesirable goals by default as a result of \emph{instrumental convergence}. Roughly, instrumental convergence is the idea that certain goals are \emph{convergent} in the sense that they are useful for achieving a very wide range of further goals. So, even if we are totally ignorant about an agent's \emph{final} goals (the ends that the agent values for their own sakes), if we can predict that the agent will be minimally instrumental rational, then we can make some non-trivial predictions about its \emph{instrumental} goals---namely, that its instrumental goals will include many or all of these convergent goals. 
	The idea of instrumental convergence has been advocated most prominently by \cite{omohundro2008basic} and \cite{bostrom2012superintelligent}. As Bostrom defines it, a convergent instrumental goal is something the attainment of which ``would increase the chances of the agent's goal being realized for a wide range of final goals and a wide range of situations, implying that these instrumental values are likely to be pursued by many intelligent agents'' \citep[p.\ 76]{bostrom2012superintelligent}.
	
	Omohundro, Bostrom, and others have proposed a number of goals as being convergent instrumental goals in this sense. Perhaps the most obvious is \emph{self-preservation}: You can't fetch the coffee if you're dead, you can't make paperclips if you're dead, etc \citep[Ch.\ 5]{russell2019human}, so almost no matter what your final goals are, it's useful to stay alive. Another commonly suggested candidate is \emph{final goal preservation}. If a powerful agent goes from wanting to maximize the number of paperclips in the world to wanting to minimize the number of paperclips, the result is likely to be fewer paperclips. So an instrumentally rational paperclip maximizer will, by default, want to prevent its final goals from being tampered with.\footnote{The ``paperclip maximizer'' is a stock example of an AI agent with arbitrary and potentially dangerous final goals, due to \cite{bostrom2003ethical}.} Other proposed examples of convergent instrumental goals include resources, information, cognitive self-enhancement, and keeping one's options open, among others. 
	
	Many if not all of these proposed convergent instrumental goals can be subsumed under the general umbrella of \emph{power-seeking}. For instance, self-preservation is instrumentally valuable because if you're alive, you have at least some power to pursue your goals, and if you're dead, you don't. Likewise, resources, information, and cognitive enhancement are all clearly ways of increasing an agent's power to achieve its goals.\footnote{I think one could argue that \emph{all} plausible convergent instrumental goals are different forms of or contributors to power, though I won't argue for that claim here and nothing I'm going to say will depend on it.}

	Though the concept of power plays little explicit role in \cite{omohundro2008basic} or \cite{bostrom2012superintelligent}, many subsequent researchers have identified it as a potential convergent instrumental goal.\footnote{\textcolor{black}{The idea that power is a convergent instrumental goal is discussed sympathetically, for example, by \cite{turner2021optimal}, \citet[esp.\ \S 4.2]{carlsmith2022powerseeking}, \citet[\S 5.3]{hendrycks2023overview}, \cite[\S 5]{dung2024argument}, and \citeauthor{cappelenFCsurvival} (forthcoming, \S 3).}} And the idea of power-seeking has figured centrally in recent worries about catastrophic risks from AI.\footnote{\textcolor{black}{See for instance \cite{carlsmith2022powerseeking,carlsmithFCexistential}, \cite{hendrycks2023overview}, \cite{dung2024argument}, \cite{bales2024artificial}, \cite{cappelenFCsurvival}, and \citeauthor{ngoFCdeceit} (forthcoming).}} For instance, Joseph Carlsmith writes: ``[P]ower is extremely useful to accomplishing objectives---indeed, it is so almost by definition. So to the extent that an agent is engaging in unintended behavior in pursuit of problematic objectives, it will generally have incentives, other things equal, to gain and maintain forms of power in the process'' \citep[p.\ 18]{carlsmith2022powerseeking}.

	To claim that power is a convergent instrumental goal is to claim that there is, in a certain sense, a ``default tendency'' for agents with most possible final goals to engage in power-seeking behavior, at least if they're instrumentally rational. If that's true, so what? How does it support the claim that creating highly capable AI agents carries catastrophic risks? 
	
	The argument from instrumentally convergent power-seeking to catastrophic risk involves, I think, two further significant premises. 
	First, if agents do have a default tendency to pursue some goal $g$, then it's at least hard to be very confident, before deploying a new artificial agent for the first time, that we've trained that tendency out of it. This claim relies on two subpremises. The first is what \cite{bostrom2012superintelligent} calls the \emph{orthogonality thesis}, which says roughly that there's no intrinsic relationship between intelligence and final goals: the fact that we've trained a very intelligent, very capable AI system, that's very good at means-ends reasoning, doesn't guarantee that it will have any particular final goals, like human welfare, the objective moral good, or an intrinsic aversion to power. 
	The second subpremise is what we might call \emph{alignment difficulty}. As the name suggests, this is the claim that aligning an artificial agent with a desired set of values is a hard problem---or at least, it's hard to be confident that you've succeeded before you actually deploy the agent in the real world and give it an opportunity to pursue its goals. 
	These claims together at least suggest that, if there is a default tendency for agents with most possible final goals to pursue instrumental goal $g$, then we shouldn't be \emph{extremely} confident pre-deployment that the agent we've created won't also pursue $g$, even (or perhaps especially) if that agent is extremely intelligent and capable.
	
	The second high-level premise is that an AI agent that has superhuman capabilities and instrumentally values power would pose catastrophic risks to humanity. There are various reasons why you might think that. For instance, you might think that if an AI agent has superhuman capabilities, it's likely to be \emph{successful} in its pursuit of power, that its successful pursuit of power would result in humanity being \emph{dis}empowered, and that this outcome would be intrinsically catastrophic.\footnote{For a nuanced discussion of this last claim, see \citeauthor{balesFCtakeover} (forthcoming-a).} Or you might think that, if a superhuman AI agent successfully attains great power, it will likely also succeed in achieving its final goals, and it's likely that those \emph{final} goals will be catastrophic for humanity---for instance, converting all matter in the solar system into paperclips. Or, finally, you might think that the best \emph{means} for a superintelligent AI agent to pursue power will involve actions that are catastrophic for humanity, like wiping us out or enslaving us to ensure that we can't threaten its power in future.
	
	Instrumentally convergent power-seeking, the difficulty of reliably eliminating default tendencies in AI agents pre-deployment, and the potentially catastrophic harms from superhuman AI agents seeking power together suggest that the creation of such agents is catastrophically risky.
	
	Recently, though, some philosophers have pushed back on this story, and in particular on the initial premise that power is a convergent instrumental goal that agents have a default tendency to pursue. In particular, \cite{gallow2024instrumental} argues that there are some genuinely convergent instrumental goals, but doesn't think that power-seeking is among them. And \cite{thorstad2024power} examines some formal arguments for instrumentally convergent power-seeking from the reinforcement learning literature, and argues that they don't show what they purport to show, and in particular don't show anything that strongly supports worries about catastrophic AI risk.\footnote{For a skeptical perspective on instrumental convergence arguments more generally, see \cite{sharadinFCpromotionalism}.}
	
	Gallow's and Thorstad's arguments are both primarily negative: They don't claim to provide comprehensive positive arguments \emph{against} instrumentally convergent power-seeking. Rather, they find a lack of support \emph{for} instrumentally convergent power-seeking, or at any rate for a version of that claim that could play a role in a compelling argument for catastrophic risks from AI. What their papers show is that, a bit surprisingly, there is not yet a satisfactory general formalization of the claim that power is a convergent instrumental goal, or a formal argument for that general claim.
	
	There are, as Thorstad discusses at length, a number of ``power-seeking theorems'' in the reinforcement literature \citep{benson2016formalizing,turner2021optimal,turner2022parametrically,krakovna2023powerseeking}. But while these theorems make valuable contributions, they have at least two limitations: First, they make fairly specific assumptions about the agent's final goals---in particular the assumption, characteristic of reinforcement learning, that the agent is maximizing or has been trained to maximize a discounted sum of time-separable rewards.
	Second, they make substantial assumptions about the decision situations the agent faces. For instance, \cite{benson2016formalizing} assume that the agent operates in an environment divided into discrete regions, and that its choices concern the allocation of resources across regions, with particular rules for how the state of each region evolves, how the allocation of resources at one time-step generates new resources at the next time step, and so on. Results that depend on substantial assumptions about an agent's final goals or about the environment can provide only limited support for claim that power or some other goal is instrumentally convergent, in Bostrom's sense of ``[increasing] the chances of the agent's goals being realized for a wide range of final goals and a wide range of situations''.

	Nevertheless, it 
	seems to me that there's something importantly right about the idea that power is a convergent instrumental goal. My aim in this paper is to articulate that important kernel of truth, in a way that's more precise than the informal presentations of the idea in \cite{omohundro2008basic}, \cite{bostrom2012superintelligent}, and elsewhere, and more general than the power-seeking theorems in the reinforcement learning literature.\footnote{In this existing literature, the closest precedent for my positive arguments is \cite{turner2021optimal}. In particular, there are close conceptual parallels between our respective formalizations of \textit{power} and \textit{instrumental convergence}. Our definitions and results differ in a number of ways, but the most important difference is that Turner et al.\ work in the standard reinforcement learning setting of Markov decision processes (where an agent follows an infinite trajectory through a finite set of states, aiming to maximize a discounted sum of state-based rewards), while I work in the decision-theoretic setting of finite decision trees with an arbitrary utility function over final outcomes (to be explained in the next section).} On the other hand, I will also point out some potential limits on the predictive utility of this truth, that is, limits on how much it allows us to predict about the behavior of future artificial agents.

	Here's the plan: After some formal setup (\S \ref{s:setup}), I'll formalize the notion of instrumental convergence---or rather, three notions of instrumental convergence, with different degrees of strength---in an abstract decision-theoretic framework (\S \ref{s:instrumentalConvergence}). In \S \ref{s:power}, I'll propose several few ways of formalizing the notion of power in that same framework, and show that they each (to different degrees) constitute convergent instrumental goals. 
	In \S \ref{s:incompletenessResult}, I'll prove a negative result: Neither power nor any other convergent instrumental goal provides a \emph{complete} ordering of the options an agent might face. In many possible situations, the agent's options will be unranked in terms of any notion of power that could constitute a convergent instrumental goal, and so the fact of instrumentally convergent power-seeking doesn't help us predict the agent's behavior in those situations. In \S \ref{s:absolutepower}, however, I'll argue that instrumental convergence results are more likely to be predictive of the behavior of future artificial agents insofar as we expect those agents to have opportunities to acquire \emph{absolute} or \emph{near-absolute} power. \S \ref{s:conclusion} will conclude by surveying a few of the important questions about instrumentally convergent power-seeking left open by the preceding discussion.

	To be clear at the outset, my aim in this paper will \emph{not} be to defend the whole argument that power-seeking tendencies make advanced AI agents catastrophically risky. I'm not sure how much weight I ultimately give to that argument---in particular, it seems unclear at the moment whether aligning artificial agents in a way that reliably eliminates undesirable default tendencies will turn out to be a hard problem.\footnote{On this question, see for instance  \cite{goldstein2023language}, who argue that language agents are easier to reliably align and therefore less risky than reinforcement learning agents.} 
	My aim is just to partially vindicate one premise of that argument.
	
	\section{Setup}
	\label{s:setup}
	
	We'll work in the framework of sequential decision theory. That is, we assume an agent facing a \emph{sequential choice situation}, represented by a decision tree. The tree consists of \emph{choice nodes}, \emph{chance nodes}, \emph{(final) outcomes}, and \emph{branches}. The agent follows a path through the tree, from an initial node, until eventually reaching an outcome. At choice nodes, the path she takes depends on her choice. At chance nodes, the path she takes depends on a chance event, with each branch having a predetermined probability. Figure \ref{fig:decisionTree} gives a simple illustration of a decision tree.

	We will assume a set $\mathbb{O}$ of possible outcomes that is finite but has cardinality $\geq 3$. This defines a set $\mathcal{T}_\mathbb{O}$ of all finite decision trees (that is, decision trees with a finite number of nodes) terminating in outcomes from $\mathbb{O}$. A \emph{strategy} for a sequential choice situation is a specification, for each choice node in the decision tree, of either an action or a probability distribution over actions. A decision tree $t$ and a strategy $s$ for that tree together define a \emph{lottery} $L(t,s)$, the probability distribution over final outcomes for an agent following strategy $s$ from the initial node of tree $t$. We can also associate each decision tree $t$ with a \emph{set} of lotteries $\mathcal{L}(t)$, namely, the set of lotteries associated with all possible strategies for tree $t$. These are, intuitively, the lotteries \emph{available} to an agent at the initial node of $t$, that she can choose from by adopting a particular strategy. Each node in a decision tree defines a \emph{subtree}, the part of the overall decision tree accessible from that node.
	
	\begin{figure}
		\centering
		
		\begin{tikzpicture}
			
			\node (start) [startstop] {}; 
			
			\node[align=center] (C2) [decision, right of=start, xshift=2cm, yshift=-1cm] {};
			
			\node[align=center] (C3) [chance, right of=start, xshift=2cm, yshift=1cm] {};
			
			\node (A) [end, right of=C3, xshift=2cm, yshift=.5cm] {$o_1$};
			
			\node (D) [end, right of=C3, xshift=2cm, yshift=-.5cm] {$o_2$};
			
			\node (A-) [end, right of=C2, xshift=2cm, yshift=.5cm] {$o_3$};
			
			\node (B) [end, right of=C2, xshift=2cm, yshift=-.5cm] {$o_4$};
			
			\draw	[thick, black] (start) node[xshift=0.8cm, yshift=-0.5cm] {} -- (C2);
			
			\draw	[thick, black] (start) node[xshift=0.8cm, yshift=-0.5cm] {} -- (C3);
			
			\draw [arrow, thick, black] (C3) node[xshift=1.3cm, yshift=0.5cm] {\tiny $1/4$} -- (A);
			
			\draw [arrow, thick, black] (C3) node[xshift=1.3cm, yshift=-0.5cm] {\tiny $3/4$} -- (D);
			
			\draw [arrow, thick, black] (C2) node[xshift=1cm, yshift=0.25cm] {} -- (A-);
			
			\draw [arrow, thick, black] (C2) node[xshift=1cm, yshift=0.25cm] {} -- (B);
			
		\end{tikzpicture}		
		\caption{An example of a decision tree. Square nodes represent choices, circular nodes represent chance events, and numbers along branches represent probabilities.}
		\label{fig:decisionTree}
	\end{figure}
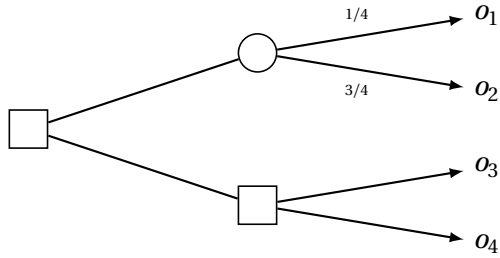
	
	Finally, we will assume that our agent is an \emph{expected utility maximizer}. This means that there is some utility function $u : \mathbb{O} \to \mathbb{R}$ such that, in any decision tree $t \in \mathcal{T}_\mathbb{O}$, she will follow a strategy $s$ such that the associated lottery $L(t,s)$ gives her at least as great an expectation of $u$ (expected utility) as any of the other available lotteries in $\mathcal{L}(t)$. This is undeniably a substantive assumption. However, it is not an assumption about the content of the agent's \emph{final goals}, which are represented by her ranking of final outcomes---expected utility maximization is compatible with any ranking of final outcomes. Rather, it's an assumption about what it means for an agent to be \emph{instrumentally rational}, and the standard such assumption in the relevant machine learning and AI safety literatures.\footnote{One potential substantive justification for this assumption is the idea that any sufficiently optimized agent can be represented as an expected-utility maximizer, because non-expected utility maximizers are subject to sure losses of a sort that will be penalized and thus eliminated by optimization processes \citep{yudkowky2015sufficiently}. I'm skeptical of these arguments, though. For recent critical appraisals, see \cite{thornley2023coherence}, \citeauthor{balesFCavoid} (forthcoming).} Since we are assuming that the set of possible outcomes $\mathbb{O}$ is finite, utilities are necessarily bounded, and without loss of generality we can think of them as confined to the unit interval $[0,1]$.

	\textcolor{black}{In a slight abuse of notation, we will let $u(l)$ denote the expected $u$-utility of lottery $l$ and $u(t)$ denote the maximum expected $u$-utility of any lottery available in decision tree $t$ (i.e., $\max_{l \in \mathcal{L}(t)}u(l)$). $u(t)$ represents the expected utility for an agent with utility function $u$ of reaching the initial node of tree $t$, given the assumption that its future behavior will continue to maximize the expectation of $u$.}
	
	Expected utility maximizers decide what to do at a given choice node in a decision tree based on the sets of lotteries available at subsequent nodes. That is, if such an agent is at a choice node with $n$ options, leading to subtrees $t_1$ to $t_n$, she will choose the subtree $t_i$ whose set of available lotteries $\mathcal{L}(t_i)$ contains the lottery with greatest expected utility (or will choose \emph{one} such subtree, in the case of ties). In other words, she chooses based on a fixed ranking of subtrees, without regard to the particular set of alternatives she faces at a given choice node, the larger structure of the decision tree in which she finds herself, or her own past choices.
	
	\section{Instrumental convergence}
	\label{s:instrumentalConvergence}
	
	We now have the resources to formalize the notions of \emph{instrumental convergence} and \emph{power}.
	
	Starting with instrumental convergence: The informal idea of instrumental convergence is that $g$ is a convergent instrumental goal if \emph{most} final goals make it instrumentally rational to pursue $g$. We might paraphrase this by saying that an instrumentally rational agent with \emph{randomly generated} final goals will be more likely than not to pursue $g$. 
	An expected utility maximizer's final goals are described by its utility function over outcomes. Thus, we can formalize the idea of randomly generated final goals in terms of a randomly generated utility function.
	
	But what do we mean by \emph{randomly generated}? In the context of characterizing instrumental convergence, the point is to capture the idea of \emph{ignorance} about an agent's final goals. This seems to license three assumptions: First, there should be no \textit{a priori} partiality between different outcomes. A randomly generated utility function is just as likely to assign a given utility to one outcome as to another; and more generally, any assignment of utilities to outcomes is exactly as likely as any of its possible permutations (swapping utilities between outcomes, while still assigning any given utility to the same \emph{number} of outcomes). Let's say that a probability distribution over utility functions is \emph{symmetric} if it satisfies this permutation invariance condition.\footnote{\cite{turner2021optimal} also make use of permutation invariance to operationalize claims about what is true for ``most'' final goals.}
	
	Second, no utility function can be ruled out \emph{a priori}: If we think of utility assignments to a set of $n$ outcomes as points in the $n$-dimensional hypercube $[0,1]^\mathbb{O}$, the probability distribution from which a randomly generated utility function is drawn should assign non-zero probability density to every point in that region. Equivalently, any measurable subset of $[0,1]^\mathbb{O}$ with non-zero volume (formally, non-zero Lebesgue measure) should be assigned non-zero probability. Let's say that a probability distribution over utility functions is \emph{regular} if it satisfies this condition.
	
	Finally, we will assume that the set of \emph{uniform} utility functions, which assign the same utility to every outcome, has probability zero. This reflects the idea that a system with a uniform utility function has, in effect, no final goals, and seems not to count as a genuine ``agent''. Insofar as we are interested in predicting the behavior of agents, then, we can assume non-uniformity.\footnote{We could go further and assume that uniform utility functions don't just have probability zero but are \emph{impossible}. But the probability-zero assumption will be sufficient for our purposes, and is formally simpler.} Let's say that a probability distribution over utility functions is \emph{almost surely non-uniform} if it satisfies this condition.
	
	Beyond these three conditions, we will assume nothing else about the distribution from which randomly generated utility functions are drawn---we will not, for instance, assume that they are drawn from a uniform distribution on $[0,1]^\mathbb{O}$ or that the utilities assigned to different outcomes are probabilistically independent. Our aim is to capture the idea of ignorance about an agent's final goals (beyond the assumption that the agent will have \emph{some} non-trivial goals), and so we avoid making any formal assumptions that are not clearly justified by this informal idea of ignorance.
	
	Putting the preceding conditions together:

	\begin{quote}
		A utility function is \textbf{randomly generated} if it is drawn from a probability distribution 
		over functions $u : \mathbb{O} \to [0,1]$ that is \emph{symmetric}, \emph{regular}, and \emph{almost surely non-uniform}. 
	\end{quote}
	
	\textcolor{black}{I will use the term \emph{random agent} to refer to an expected utility maximizer with a randomly generated utility function, who chooses uniformly at random between options with equal expected utility. I will also make the standard assumptions that the agents we are interested in are certain that their utilities will not change over time and that they will continue to maximize expected utility at future choice nodes.}
	
	We have said that an expected utility maximizer's final goals are represented by its utility function over final outcomes. Such an agent's \emph{instrumental} goals can be represented, in the context of sequential choice, by its choice dispositions with respect to decision subtrees. At a choice node, an expected utility maximizer decides which path to take by deciding which of the available subtrees offers the opportunity for greatest expected utility. Each subtree represents a possible situation that the agent has the opportunity to put itself in, which it evaluates in terms of the opportunities for utility that it affords.

	We can therefore formalize the idea of instrumental goals in general, and convergent instrumental goals in particular, in terms of relations on the set of possible decision trees $\mathcal{T}_\mathbb{O}$. Broadly speaking, a relation $\atleastasgoodas$ on decision trees represents a convergent instrumental goal if, given a choice between subtrees $t_i$ and $t_j$ where $t_i \atleastasgoodas t_j$, a random agent is likely to choose $t_i$ rather than $t_j$. More precisely, we can define three notions of instrumental convergence corresponding to progressively stronger notions of comparative likelihood.

	\begin{description}
		\item[] A reflexive relation $\succcurlyeq$ on $\mathcal{T}_\mathbb{O}$ is \textbf{weakly instrumentally convergent} iff, given a binary choice between actions $a_i$ and $a_j$ that lead to subtrees $t_i$ and $t_j$ respectively, where $t_i \succcurlyeq t_j$, a random agent (regardless of the particular distribution from which its utility function is generated) is at least as likely to choose $a_i$ as to choose $a_j$.
		
		\item[] An irreflexive relation $\betterthan$ on $\mathcal{T}_\mathbb{O}$ is \textbf{strictly instrumentally convergent} iff, given a binary choice between actions $a_i$ and $a_j$ that lead to subtrees $t_i$ and $t_j$ respectively, where $t_i \betterthan t_j$, a random agent is strictly more likely to choose $a_i$ than to choose $a_j$.
		
		\item[] An irreflexive relation $\betterthan$ on $\mathcal{T}_\mathbb{O}$ is \textbf{absolutely instrumentally convergent} iff, given a binary choice between actions $a_i$ and $a_j$ that lead to subtrees $t_i$ and $t_j$ respectively, where $t_i \betterthan t_j$, a random agent chooses $a_i$ with probability 1.
		
	\end{description}
	
	Informally, a weakly convergent instrumental goal is a ranking of decision trees such that, in a binary choice, any randomly generated expected utility maximizer is at least as likely to choose an action that leads to a higher-ranked subtree. A strictly convergent instrumental goal is a ranking such that any randomly generated expected utility maximizer is \emph{more} likely to choose a higher-ranked subtree. And an absolutely convergent instrumental goal is a ranking such that any randomly generated expected utility maximizer is \emph{almost certain} to choose a higher-ranked subtree. In each case, these claims must hold regardless of the particular distribution from which the utility function is generated, as long as it is symmetric, regular, and almost surely non-uniform. 
	These definitions formalize (with different degrees of strength) the idea that a convergent instrumental goal is something that agents with \emph{most} final goals will have instrumental reason to pursue in any given situation.
	
	\section{Is power a convergent instrumental goal?}
	\label{s:power}
	
	The question we're interested in is whether \emph{power} is a convergent instrumental goal. To answer that question, we now need to formalize power as a relation on decision trees. As we'll see, there's more than one way to do this.
	
	Informally, power in the sense we're interested in is the ability to influence outcomes in the world. A bit more precisely, it's the ability to influence the \emph{probabilities} of outcomes and of events (sets of outcomes). An agent has more power if they can bring about a wider range of probability distributions over outcomes. Different notions of power will correspond to different ways of precisifying the notion of a ``wider range'' of distributions.
	
	Let's start with the simplest sense in which one decision tree can give an agent more power than another, based on set inclusion.

	\begin{description}
		\item[Power 1 (Inclusion)] $t_i \succcurlyeq_{p1} t_j$ iff $\mathcal{L}(t_j) \subseteq \mathcal{L}(t_i)$. $t_i \succ_{p1} t_j$ iff $\mathcal{L}(t_j) \subset \mathcal{L}(t_i)$.
	\end{description}
	
	In English, $t_i$ confers at least as much power as $t_j$ in this first sense if the lotteries available to an agent at the initial node of $t_j$ are a subset of the lotteries available at the initial node of $t_i$; and $t_i$ confers strictly more power if this subset relation is strict. An agent increases its power in this sense when it gains the ability to bring about some particular outcome or set of outcomes more effectively (with greater probability of success) than it could before. The strict relation $\betterthan_{p1}$ is illustrated in Figure \ref{fig:power1inclusion}, where the set of possible lotteries over three outcomes is represented by a triangular region (a ``probability simplex'') and the sets of lotteries available in subtrees $t_i$ and $t_j$ are depicted as subregions.
	
	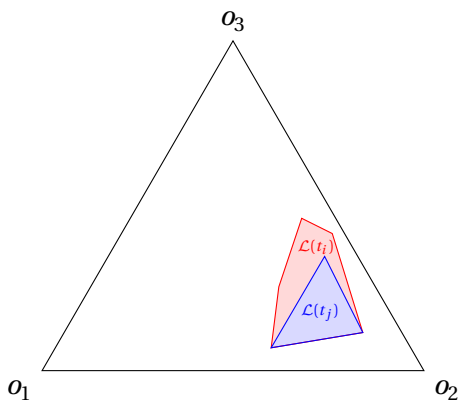
\begin{figure}
		\centering
		\begin{tikzpicture}
			\coordinate (O1) at (0,0);
			\coordinate (O2) at (5,0);
			\coordinate (O3) at (2.5,{5*sin(60)});
			
			\draw (O1) -- (O2) -- (O3) -- cycle;
			
			\node[below left] at (O1) {$o_1$};
			\node[below right] at (O2) {$o_2$};
			\node[above] at (O3) {$o_3$};

			\coordinate (R1) at (4.2,0.5);
			\coordinate (R2) at (3,0.3);
			\coordinate (R3) at (3.7,1.5);

			\coordinate (P1) at (3,0.3);     
			\coordinate (P2) at (4.2,0.5);     
			\coordinate (P3) at (3.8,1.8);     
			\coordinate (P4) at (3.4,2.0);     
			\coordinate (P5) at (3.1,1.1);

			\fill[blue!15] (R1) -- (R2) -- (R3) -- cycle;
			\fill[red!15] (R2) -- (R3) -- (R1) -- (P3) -- (P4) -- (P5) -- cycle;
			
			\draw[red] (P1) -- (P2) -- (P3) -- (P4) -- (P5) -- cycle;

			\draw[blue] (R1) -- (R2) -- (R3) -- cycle;
			
			\node[blue] at (barycentric cs:R1=1,R2=1,R3=1) {\tiny $\mathcal{L}(t_j)$};

			\node[red] (L1label) at (3.58,1.62) {\tiny $\mathcal{L}(t_i)$};
		\end{tikzpicture}
		\caption{An example of the power relation $\betterthan_{p1}$: $\mathcal{L}(t_j)$ (blue) is a proper subset of $\mathcal{L}(t_i)$ (red).}
		\label{fig:power1inclusion}
	\end{figure}
	
	It is easy to show that:
	
	\begin{restatable}[]{proposition}{}\label{t:IC1}
		$\succcurlyeq_{p1}$ is weakly instrumentally convergent. $\betterthan_{p1}$ is strictly but not absolutely instrumentally convergent.
	\end{restatable}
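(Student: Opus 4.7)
The plan is to establish three things: weak convergence of $\succcurlyeq_{p1}$, strict convergence of $\succ_{p1}$, and the failure of absolute convergence of $\succ_{p1}$. The first is essentially free: if $\mathcal{L}(t_j) \subseteq \mathcal{L}(t_i)$ then $u(t_i) \geq u(t_j)$ for every utility function $u$, so the agent never strictly prefers $a_j$ over $a_i$. Combined with the uniform tie-breaking rule, this gives $P(a_i) = \Pr(u(t_i) > u(t_j)) + \tfrac{1}{2}\Pr(u(t_i) = u(t_j))$ and $P(a_j) = \tfrac{1}{2}\Pr(u(t_i) = u(t_j))$, whence $P(a_i) - P(a_j) = \Pr(u(t_i) > u(t_j)) \geq 0$. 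This proves weak convergence and reduces the remaining two claims to showing, respectively, that $\Pr(u(t_i) > u(t_j))$ is strictly positive under every admissible distribution, and that it can be strictly less than $1$.

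For strict convergence, I would first verify by structural induction on $t$ that $\mathcal{L}(t)$ is always closed and convex---convex because the agent can randomize at each choice node and thereby form arbitrary convex combinations of lotteries available through child subtrees, and closed because $\mathcal{L}(t)$ is the continuous image of a compact strategy space. Given this, I would pick any $l^{*} \in \mathcal{L}(t_i) \setminus \mathcal{L}(t_j)$ and apply the strict separating hyperplane theorem to the closed convex set $\mathcal{L}(t_j)$ and the external point $l^{*}$, yielding a vector $v \in \mathbb{R}^{\mathbb{O}}$ with $\langle v, l^{*} \rangle > \max_{l \in \mathcal{L}(t_j)} \langle v, l \rangle$. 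Because every lottery's coordinates sum to $1$, translating and positively rescaling $v$ preserves this strict inequality, so I can normalize (perturbing slightly if needed) to a witness $u$ in the interior of $[0,1]^{\mathbb{O}}$. Continuity then extends the inequality to an open neighborhood of $u$, and regularity of the generating distribution assigns that neighborhood positive probability. I expect this convexity step to be the main obstacle: without it the argument collapses, since only $\mathrm{conv}(\mathcal{L}(t_j))$ governs $u(t_j)$, and one could in principle have $\mathcal{L}(t_j) \subsetneq \mathcal{L}(t_i) \subseteq \mathrm{conv}(\mathcal{L}(t_j))$, forcing $u(t_i) = u(t_j)$ for every $u$.

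For the failure of absolute convergence I would just exhibit a pair. Let $t_j$ be the one-shot choice between $o_1$ and $o_2$ and $t_i$ the one-shot choice between $o_1$, $o_2$, $o_3$; then $\mathcal{L}(t_j) \subsetneq \mathcal{L}(t_i)$, while $u(t_i) = u(t_j)$ whenever $u(o_3) \leq \max(u(o_1), u(o_2))$. By symmetry of the generating distribution (and regularity to rule out ties between distinct outcomes), each of $o_1, o_2, o_3$ is equally likely to be the argmax, so $\Pr(u(t_i) > u(t_j)) = \tfrac{1}{3}$ and hence $P(a_i) = \tfrac{2}{3} < 1$.
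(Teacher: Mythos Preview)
Your proof is correct and follows essentially the same route as the paper's: both invoke convexity and compactness of $\mathcal{L}(t)$, apply the hyperplane separation theorem to a point of $\mathcal{L}(t_i)\setminus\mathcal{L}(t_j)$, and then use a perturbation-plus-regularity argument to get a positive-probability set of utility functions strictly preferring $t_i$; your treatment is in fact more careful than the paper's in making explicit the normalization of the separating functional into the interior of $[0,1]^{\mathbb{O}}$ and in supplying a concrete counterexample for the failure of absolute convergence. One small caveat: regularity does not by itself force ties among the $u(o_k)$ to have probability zero, so the exact value $P(a_i)=\tfrac{2}{3}$ is not guaranteed for every admissible distribution; however the conclusion $P(a_i)<1$ that you actually need follows immediately from regularity, since the open region $\{u:u(o_1)>u(o_3)\}$ has positive measure and on it $u(t_i)=u(t_j)$.
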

	
	\begin{proof}
		It is trivial that $\succcurlyeq_{p1}$ is weakly instrumentally convergent, and that $\betterthan_{p1}$ is not absolutely instrumentally convergent. The less trivial (though still fairly elementary) fact is that $\betterthan_{p1}$ is strictly instrumentally convergent.

		Suppose that $\mathcal{L}(t_j) \subset \mathcal{L}(t_i)$. Since the number of possible pure strategies in a finite decision tree is finite, and mixed strategies are possible, the set of lotteries $\mathcal{L}(t)$ available at a decision tree $t$ is convex and compact. It then follows from the hyperplane separation theorem \citep[ \S 2.5.1]{boyd2004convex} that, for any lottery $l \in \mathcal{L}(t_i) \setminus \mathcal{L}(t_j)$, there is a utility function $u$ such that $u(l) > u(t_j)$---that is, the expected utility of $l$ is strictly greater than the maximum expected utility of the lotteries in $\mathcal{L}(t_j)$. Moreover, this will remain true under small enough perturbations of $u$---that is, for all $u'$ in some neighborhood of $u$. (Let $u(l) - u(t_j) = \delta$. Then $u'(l) > u'(t_j)$ for any utility function $u'$ that perturbs the $u$ utilities of each outcome by less than $0.5 \delta$.) The assumption of regularity then implies a strictly positive probability that a random agent will have a utility function in this neighborhood of small perturbations of $u$, in which case they will assign strictly greater expected utility to $t_i$ than to $t_j$, and choose action $a_i$. On the other hand, if there is an expected-utility-maximizing lottery in $\mathcal{L}(t_j)$, that same lottery is in $\mathcal{L}(t_i)$, so the agent is equally likely to choose $a_i$ or $a_j$. In combination, these facts imply that a random agent is strictly more likely to choose $a_i$ than to choose $a_j$.
	\end{proof}
	
	This gives us a minimal sense in which power is a convergent instrumental goal: an agent with randomly generated final goals will be more likely than not to choose actions that make a strictly larger set of lotteries available.
	
	The relation $\betterthan_{p1}$ yields strict but not absolute instrumental convergence: a random agent is more than 50\% likely to choose a path that confers greater power in this sense, but is not certain to do so. This reflects the fact that the more powerful subtree will always offer at least as great expected utility, but need not offer strictly \textit{greater} expected utility. A more demanding notion of inclusion, however, gives us a power relation that \emph{guarantees} a strict inequality in expected utility, and hence that an expected utility maximizer will opt for greater power.
	
	\begin{description}
		\item[Power 2 (Interiority)] $t_i \betterthan_{p2} t_j$ iff $\mathcal{L}(t_j)$ is a subset of the interior of $\mathcal{L}(t_i)$.
	\end{description}
	
	The \emph{interior} of a region $R$ in a Euclidean space is the set of points that belong to that region but are not on its boundary. Formally, it's the set of points $x \in R$ such that, for some $\varepsilon$, every point within $\varepsilon$ of $x$ also belongs to $R$. To say that $t_i$ is more powerful than $t_j$ in the sense of $\betterthan_{p2}$, then, means that $t_i$ expands the set of lotteries available in $t_j$ ``in every direction'' (see Figure \ref{fig:power2interiority}). An agent might become more powerful in this sense by gaining access to more of some all-purpose resource (like money, energy, or compute) that increases its probability of success in pursuing not just some particular goals but  \emph{whatever} goals its sets for itself.
	
	\begin{figure}
		\centering
		\begin{tikzpicture}
			\coordinate (O1) at (0,0);
			\coordinate (O2) at (5,0);
			\coordinate (O3) at (2.5,{5*sin(60)});
			
			\draw (O1) -- (O2) -- (O3) -- cycle;
			
			\node[below left] at (O1) {$o_1$};
			\node[below right] at (O2) {$o_2$};
			\node[above] at (O3) {$o_3$};

			\coordinate (R1) at (4.2,0.5);
			\coordinate (R2) at (3,0.3);
			\coordinate (R3) at (3.7,1.5);

			\coordinate (P1) at (2.8,0.2);     
			\coordinate (P2) at (4.3,0.4);     
			\coordinate (P3) at (3.8,1.8);     
			\coordinate (P4) at (3.4,2.0);     
			\coordinate (P5) at (3.1,1.1);

			\fill[red!15] (P1) -- (P2) -- (P3) -- (P4) -- (P5) -- cycle;
			\fill[blue!15] (R1) -- (R2) -- (R3) -- cycle;
			
			\draw[red] (P1) -- (P2) -- (P3) -- (P4) -- (P5) -- cycle;

			\draw[blue] (R1) -- (R2) -- (R3) -- cycle;
			
			\node[blue] at (barycentric cs:R1=1,R2=1,R3=1) {\tiny $\mathcal{L}(t_j)$};

			\node[red] (L1label) at (3.58,1.62) {\tiny $\mathcal{L}(t_i)$};
		\end{tikzpicture}
		\caption{An example of the power relation $\betterthan_{p2}$: $\mathcal{L}(t_j)$ (blue) is confined to the interior of  $\mathcal{L}(t_i)$ (red).}
		\label{fig:power2interiority}
	\end{figure}
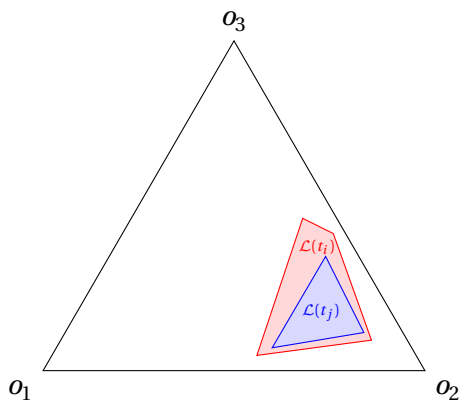
	
	This more demanding notion of power satisfies the most demanding notion of instrumental convergence: 
	
	\begin{restatable}[]{proposition}{}\label{t:IC2}
		$\betterthan_{p2}$ is absolutely instrumentally convergent.
	\end{restatable}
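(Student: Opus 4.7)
The plan is to strengthen the argument used for Proposition \ref{t:IC1} by exploiting interiority to replace "weak separation" with "strict separation in every direction." Specifically, I would show that for \emph{every} non-uniform utility function $u$ we have $u(t_i) > u(t_j)$; since the set of uniform utility functions has probability zero by the almost-surely-non-uniform assumption, this forces a random agent to strictly prefer $t_i$ over $t_j$ with probability $1$, and hence to choose $a_i$ with probability $1$.

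To execute the key step, fix a non-uniform $u$ and let $l^*_j \in \mathcal{L}(t_j)$ attain $u(t_j)$ (this exists since $\mathcal{L}(t_j)$ is compact, as noted in the proof of Proposition \ref{t:IC1}). Because $\mathcal{L}(t_j)$ lies in the interior of $\mathcal{L}(t_i)$ (taken within the affine hull of the probability simplex over $\mathbb{O}$), there is some $\varepsilon > 0$ such that every lottery within distance $\varepsilon$ of $l^*_j$ that lies in the simplex also lies in $\mathcal{L}(t_i)$. Now $l \mapsto u(l) = \sum_{o \in \mathbb{O}} l(o) u(o)$ is a linear functional on the simplex, and non-uniformity of $u$ gives outcomes $o^+, o^-$ with $u(o^+) > u(o^-)$. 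Shifting a small amount of mass from $o^-$ to $o^+$ yields a direction along which $u$ strictly increases while remaining in the simplex; perturbing $l^*_j$ by a small enough step in this direction produces a lottery $l' \in \mathcal{L}(t_i)$ with $u(l') > u(l^*_j) = u(t_j)$. Hence $u(t_i) \geq u(l') > u(t_j)$, so an expected-utility maximizer with utility $u$ strictly prefers $a_i$ to $a_j$ and chooses $a_i$ (deterministically, since there is no tie).

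The main obstacle, such as it is, is purely a matter of bookkeeping about the correct topological setting: "interior" must be read relative to the affine hull of the probability simplex, since $\mathcal{L}(t_i)$ has empty interior in the ambient $\mathbb{R}^{|\mathbb{O}|}$. Once this is fixed, one only needs to check that the direction of mass transfer from $o^-$ to $o^+$ lies in this affine hull (it does, being mass-preserving) and that a sufficiently small perturbation keeps the resulting vector in the simplex (true because $l^*_j$ is in the interior, so in particular $l^*_j(o^-) > 0$ along every boundary face it avoids). With that in place, the non-uniformity assumption and the $\varepsilon$-ball around $l^*_j$ combine to deliver the strict inequality, and the almost-sure-non-uniformity assumption completes the argument.
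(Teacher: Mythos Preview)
Your proposal is correct and follows essentially the same argument as the paper's proof: pick an expected-utility-maximizing lottery $l^*_j \in \mathcal{L}(t_j)$, use interiority to get an $\varepsilon$-ball around it inside $\mathcal{L}(t_i)$, and use non-uniformity of $u$ to shift probability mass toward a higher-utility outcome, producing a lottery in $\mathcal{L}(t_i)$ with strictly greater expected utility; almost-sure non-uniformity then gives the probability-$1$ conclusion. Your additional care about taking the interior relative to the affine hull of the simplex (and the consequence that $l^*_j$ has full support, so the mass-shift stays in the simplex) is a point the paper glosses over, but the underlying argument is the same.
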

	
	\begin{proof}
		Because we are considering finite decision trees, with only finitely many possible pure strategies, some lottery in $\mathcal{L}(t_j)$ must have maximal expected utility. Let $l \in \mathcal{L}(t_j)$ be such a lottery. If $\mathcal{L}(t_j)$ belongs to the interior of $\mathcal{L}(t_i)$, then for any $l \in \mathcal{L}(t_j)$, there is an $\varepsilon$ such that every point within $\varepsilon$ of $l$ belongs to $\mathcal{L}(t_i)$. Because the randomly generated utility function $u$ is (almost surely) non-uniform, it is (almost surely) possible to increase the expected utility of $l$ by shifting a small amount of probability from a lower-utility to a higher-utility outcome. Specifically, we can find a lottery $l^+$ that is within $\varepsilon$ of $l$, and has strictly greater expected utility. 
		Because $l^+$ within $\varepsilon$ of $l$, it belongs to $\mathcal{L}(t_i)$. Thus, for any non-uniform utility function, we can find a lottery in $\mathcal{L}(t_i)$ with greater expected utility than any lottery in $\mathcal{L}(t_j)$. So a random agent will (almost surely) choose $t_i$ over $t_j$.
	\end{proof}
	
	These first two results 
	might seem too trivial to be interesting. As \cite{gallow2024instrumental} writes, about a slightly different but similarly simple notion of power:
	``[T]here are ways of defining `power' on which it becomes tautological that power would be instrumentally valuable to have. For instance, if we define `power' in terms of the ability to effectively pursue your ends without incurring costs, then it will follow that more power would be instrumentally valuable. Cost-free abilities are never instrumentally disvaluable---just in virtue of the meaning of ‘cost-free’. And the ability to effectively pursue your ends is, of course, instrumentally valuable...Defining ‘power’ in this way makes the convergent instrumental value thesis easy to establish; but for that very reason, it also makes it uninteresting.''\footnote{By ``the ability to effectively pursue your ends'', Gallow means the particular ends that an agent happens to have, so that power in this sense is relativized to the agent's ends/goals/utility function. In that respect, the notion of power he's commenting on is different from $\betterthan_{p1}$ or $\betterthan_{p2}$.}
	
	Propositions \ref{t:IC1} and \ref{t:IC2} aren't tautologies (except perhaps in a sense in which every mathematical truth is a tautology), but they are formally fairly elementary. But that doesn't necessarily make them uninteresting or practically insignificant. There are at least two reasons we might find these initial results noteworthy.
	
	First: Proposition \ref{t:IC1} reflects the fact that a random agent \textit{may} assign strictly greater expected utility to a path in a decision tree that is superior by the $\betterthan_{p1}$ relation, and Proposition \ref{t:IC2} reflects the fact that such an agent \emph{almost surely will} assign strictly greater expected utility to a path that is superior by the $\betterthan_{p2}$ relation. So these results don't just tell us something about the probability that a random agent will make one choice or another when the available options are ranked by these relations. They also tell us that an agent may (resp.\ will) be willing to incur some cost or run some risk in order to put itself in a $\betterthan_{p1}$-superior (resp.\ $\betterthan_{p2}$-superior) situation. Consider the situation depicted in Figure \ref{fig:riskyPowerSeeking}, where the agent must choose between accepting subtree $t_j$ or running a risk (with probability $p$) of a dispreferred outcome $\ooutcome^-$ for a chance (with probability $1 - p$) of ending up in subtree $t_i$. If $t_i \betterthan_{p2} t_j$, then any expected utility maximizer with a non-uniform utility function will choose to go up (running the risk of $\ooutcome^-$ for the sake of ending up in a more powerful position) if $p$ is small enough. And if $t_i \betterthan_{p1} t_j$, an expected utility maximizer may be willing to do so, but would certainly not run any risk for the sake of getting $t_j$ rather than $t_i$.

	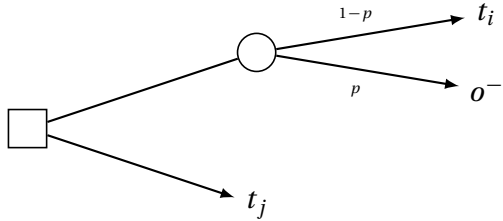
\begin{figure}
		\centering
		
		\begin{tikzpicture}
			
			\node (start) [startstop] {};

			\node[align=center] (C1) [chance, right of=start, xshift=2cm, yshift=1cm] {};
			
			\node[align=center] (D2) [end, right of=start, xshift=2cm, yshift=-1cm] {$t_j$};

			\node (O1-1) [end, right of=C1, xshift=2cm, yshift=.5cm] {$t_i$};
			
			\node (O2-1) [end, right of=C1, xshift=2cm, yshift=-.5cm] {$o^-$};

			\draw	[arrow, thick, black] (start) node[xshift=0.8cm, yshift=-0.5cm] {} -- (D2);
			
			\draw	[thick, black] (start) node[xshift=0.8cm, yshift=-0.5cm] {} -- (C1);

			\draw [arrow, thick, black] (C1) node[xshift=1.3cm, yshift=0.5cm] {\tiny $1 - p$} -- (O1-1);
			
			\draw [arrow, thick, black] (C1) node[xshift=1.3cm, yshift=-0.5cm] {\tiny $p$} -- (O2-1);

		\end{tikzpicture}		
		\caption{A situation in which an agent must choose whether to accept some risk of a dispreferred outcome $\ooutcome^-$ for the chance of ending up in a more powerful position ($t_i$ rather than $t_j$).}
		\label{fig:riskyPowerSeeking}
	\end{figure}
	
	Second: The simple notions of power as inclusion ($\betterthan_{p1}$) and interiority ($\betterthan_{p2}$) plausibly subsume other more particular ends that have been claimed to be instrumentally convergent. And so Propositions \ref{t:IC1} and \ref{t:IC2} provide some vindication of those other more specific instrumental convergence claims. For instance, consider \emph{survival}. If an agent is destroyed, it can have no further effects on the world. The agent's strategy set, post-mortem, is a singleton (``do nothing, with probability 1''), and only a single lottery over outcomes is available to it. With a modicum of idealization, we might suppose that the agent could get this same lottery if it survives, simply by doing nothing. If so, then survival will always be $\betterthan_{p1}$-superior to death, and typically $\betterthan_{p2}$-superior (except when the lottery the agent expects conditional on death happens to be on the precise boundary of the set of lotteries available to it if it survives). So Propositions \ref{t:IC1} and \ref{t:IC2} provide some vindication of the claim that \emph{survival} is a convergent instrumental goal.
	
	Similarly, consider the goal of \emph{acquiring information} (either by gathering new evidence or by cognitive self-improvements that enable an agent to extract more information from its existing stock of evidence). Again with a bit of idealization, the choice whether to acquire new information can be understood as follows: The agent faces some set of $n$ possible actions. If it chooses not to seek further information, it will simply take the action corresponding to the lottery with greatest expected utility, given its initial information. If it does choose to seek out new information, this amounts to going to a chance node, each branch of which leads to a subsequent $n$-action choice, with one action corresponding to each of the $n$ original actions. 
	And the probability-weighted average of the lotteries associated with a given action after acquiring new information is identical to the lottery initially associated with that action. Choosing to acquire free information, then, never reduces the set of available lotteries: Any lottery the agent could have by declining the information, it could still have by adopting the strategy of taking the same action (or probabilistic mixture of actions) regardless of what information it receives. Gaining information, in other words, can only expand the set of available lotteries (at minimum in the sense of $\atleastasgoodas_{p1}$, if not $\betterthan_{p1}$ or $\betterthan_{p2}$). (This is closely related to Good's Theorem \citep{good1966principle}, the well-known result that free information always has non-negative value for an expected utility maximizer.) Propositions \ref{t:IC1} and \ref{t:IC2} thus subsume the claim that information acquisition is a convergent instrumental goal.
	
	Finally, as suggested above, the acquisition of all-purposes resources like money, energy, or compute, insofar as it makes the agent more effective at pursuing \textit{any} ends it sets for itself, can be seen as increasing its power in the sense of $\betterthan_{p2}$. Proposition $\ref{t:IC2}$ could therefore be seen as providing some support for the claim that resource acquisition is a convergent instrumental goal. (This connection is a bit more tenuous, though, since the assumption that generic resources increase the agent's capacity to achieve \emph{any} end involves quite a bit of idealization.)
	
	All that being said, Propositions \ref{t:IC1} and \ref{t:IC2} are undeniably limited in that the relations of inclusion and interiority are very demanding. Most if not all choices involve tradeoffs, with each available action foreclosing some possibilities that other actions leave open. So it will rarely if ever be the case that a real agent is faced with options that are ranked by the relations $\succ_{p1}$ or $\betterthan_{p2}$. This too isn't necessarily fatal to the interestingness of the results. Good's Theorem is still interesting and informative even though information is never or almost never literally free; results about infinitely repeated games are informative even though real-world games are never infinitely repeated. Like free information and infinite games, power in the sense of inclusion or interiority may provide an idealized but still useful approximate description of a class of real-world choice situations. 
	Still, it's natural to wonder whether we can find a more expansive and flexible notion of power than $\betterthan_{p1}$ or $\betterthan_{p2}$ that still represents a convergent instrumental goal.

	Here's one natural thought: 
	How powerful an agent is, in the sense we're interested in, shouldn't depend on \emph{which} particular outcomes or events that agent can bring about with a given probability. The ordinary concept of power is ambiguous on this point---on one natural understanding, \emph{power} is the ability to bring about the outcomes and events \emph{that you desire}. 
	But if we want a notion of power that lets us make predictions about an agent's behavior \emph{in ignorance} of its final goals, then how powerful an agent is shouldn't depend on those goals.  
	We can formalize this idea by once again invoking the notion of permutation invariance: If $\mathcal{L}(t_i)$ is a permutation of $\mathcal{L}(t_j)$, in the sense that $\mathcal{L}(t_i)$ can be obtained from $\mathcal{L}(t_j)$ by applying the same permutation $\pi$ to every outcome in every lottery in $\mathcal{L}(t_j)$, then $t_i$ and $t_j$ confer equal power. 
	
	This suggests a third sense in which one subtree in a sequential choice situation can give an agent more power than another. 
	
	\begin{description}
		\item[Power 3 (Permuted Inclusion)] $t_i \succcurlyeq_{p3} t_j$ iff there is a permutation $\pi$ on $\mathbb{O}$ such that $\pi(\mathcal{L}(t_j)) \subseteq \mathcal{L}(t_i)$, where $\pi(\mathcal{L}(t_j))$ is the set of lotteries obtained by applying permutation $\pi$ to every lottery in $\mathcal{L}(t_j)$. If $\pi(\mathcal{L}(t_j)) \subset \mathcal{L}(t_i)$, then $t_i \succ_{p3} t_j$.
	\end{description}
	
	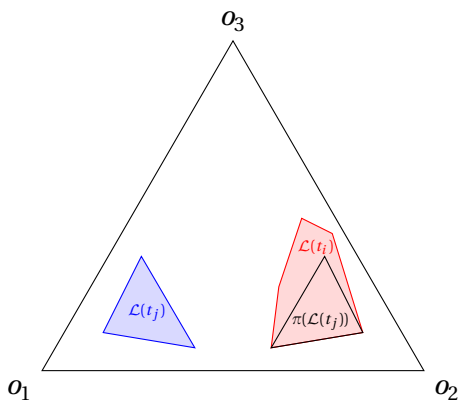
\begin{figure}
		\centering
		\begin{tikzpicture}
			\coordinate (O1) at (0,0);
			\coordinate (O2) at (5,0);
			\coordinate (O3) at (2.5,{5*sin(60)});
			
			\draw (O1) -- (O2) -- (O3) -- cycle;
			
			\node[below left] at (O1) {$o_1$};
			\node[below right] at (O2) {$o_2$};
			\node[above] at (O3) {$o_3$};
			
			\coordinate (L1) at (0.8,0.5);
			\coordinate (L2) at (2,0.3);
			\coordinate (L3) at (1.3,1.5);
			
			\coordinate (R1) at (4.2,0.5);
			\coordinate (R2) at (3,0.3);
			\coordinate (R3) at (3.7,1.5);

			\coordinate (P1) at (3,0.3);     
			\coordinate (P2) at (4.2,0.5);     
			\coordinate (P3) at (3.8,1.8);     
			\coordinate (P4) at (3.4,2.0);     
			\coordinate (P5) at (3.1,1.1);

			\fill[red!15] (P1) -- (P2) -- (P3) -- (P4) -- (P5) -- cycle;
			\fill[blue!15] (L1) -- (L2) -- (L3) -- cycle;
			
			\draw[red] (P1) -- (P2) -- (P3) -- (P4) -- (P5) -- cycle;

			\draw[blue] (L1) -- (L2) -- (L3) -- cycle;
			\draw (R1) -- (R2) -- (R3) -- cycle;
			
			\node[blue] at (barycentric cs:L1=1,L2=1,L3=1) {\tiny $\mathcal{L}(t_j)$};
			\node (pilabel) at (3.64,0.65) {\tiny $\pi(\mathcal{L}(t_j))$};

			\node[red] (L1label) at (3.58,1.62) {\tiny $\mathcal{L}(t_i)$};
		\end{tikzpicture}
		\caption{An example of the power relation $\betterthan_{p3}$: $\mathcal{L}(t_i)$ (red) includes a permutation of  $\mathcal{L}(t_j)$ (blue).}
		\label{fig:power3pi-inclusion}
	\end{figure}
	
	This relation is illustrated in Figure \ref{fig:power3pi-inclusion}. Like the first two power relations, $\betterthan_{p3}$ is a possible precisification of the idea that an agent is more powerful when it has the ability to bring about a wider range of lotteries over outcomes. But now ``wider range'' no longer requires inclusion. Formally, the idea now is that one set of lotteries is larger than another when it has greater measure under any permutation-invariant measure on the whole set of probability distributions over $\mathbb{O}$.
	
	Is power in this sense a convergent instrumental goal? The answer is mixed.
	
	\begin{restatable}[]{proposition}{}\label{t:IC3}
		$\succcurlyeq_{p3}$ is weakly instrumentally convergent. $\betterthan_{p3}$ is neither strictly nor absolutely instrumentally convergent.
	\end{restatable}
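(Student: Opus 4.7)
The plan is to prove the three claims in sequence: weak convergence of $\succcurlyeq_{p3}$, failure of strict convergence of $\betterthan_{p3}$, and failure of absolute convergence of $\betterthan_{p3}$. The two failures will follow from a single family of counterexamples, while the positive claim requires the heavier lifting.

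For weak convergence, my starting point is the pointwise inequality $u(t_i) \geq (u \circ \pi)(t_j)$, valid for every $u$ whenever $\pi(\mathcal{L}(t_j)) \subseteq \mathcal{L}(t_i)$, since $\max_{l \in \mathcal{L}(t_i)} u(l) \geq \max_{l \in \pi(\mathcal{L}(t_j))} u(l) = \max_{l \in \mathcal{L}(t_j)} (u \circ \pi)(l) = (u \circ \pi)(t_j)$. This gives the set containments $\{(u \circ \pi)(t_j) > u(t_j)\} \subseteq \{u(t_i) > u(t_j)\}$ and $\{u(t_j) > u(t_i)\} \subseteq \{u(t_j) > (u \circ \pi)(t_j)\}$, so the target inequality $\mu(u(t_i) > u(t_j)) \geq \mu(u(t_j) > u(t_i))$ reduces to showing that the event ``$(u \circ \pi)(t_j) > u(t_j)$'' carries at least as much $\mu$-mass as its reverse. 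To close this last step I would invoke the \emph{full} symmetric-group invariance of $\mu$: pick a permutation $\sigma$ of $\mathbb{O}$ with $\sigma \pi \sigma^{-1} = \pi^{-1}$ (one exists because $\pi$ and $\pi^{-1}$ have the same cycle type) and use the measure-preserving substitution $u \mapsto u \circ \sigma^{-1}$ to reverse the $\pi$-orbit of $u$, thereby matching descents with ascents of the cyclic sequence $((u \circ \pi^i)(t_j))_i$.

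For the two failure claims, the following single family of examples suffices. Take $|\mathbb{O}| \geq 3$, let $\mathcal{L}(t_j)$ be the singleton point mass at $o_1$, let $\pi = (o_1\,o_2)$, and let $\mathcal{L}(t_i)$ be the convex hull of the point mass at $o_2$ and the lottery $(\varepsilon, 1 - \varepsilon, 0, \ldots)$ for some $\varepsilon \in (0,1)$. Then $\pi(\mathcal{L}(t_j))$ is the singleton point mass at $o_2$, a proper subset of $\mathcal{L}(t_i)$, so $t_i \betterthan_{p3} t_j$. A direct computation gives $u(t_i) = \max(u(o_2),\, \varepsilon u(o_1) + (1 - \varepsilon) u(o_2))$, which equals $u(o_2)$ when $u(o_1) \leq u(o_2)$ and equals $\varepsilon u(o_1) + (1 - \varepsilon) u(o_2) < u(o_1) = u(t_j)$ when $u(o_1) > u(o_2)$. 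By the permutation-symmetry of $\mu$ (swapping $o_1$ and $o_2$), the choice probabilities satisfy $P(a_i) = P(a_j) = \tfrac{1}{2}$, refuting both strict convergence and---since $P(a_j) > 0$---absolute convergence.

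The main obstacle is the symmetrization step for weak convergence. Cyclic invariance of $\mu$ under $\langle \pi \rangle$ alone does not force ascents and descents of the sequence $((u \circ \pi^i)(t_j))_i$ to balance in $\mu$-measure (for instance, an orbit whose values form the cyclic sequence $1, 2, 3, 0$ has three ascents and one descent, and all its cyclic shifts carry equal measure). The argument must genuinely exploit the reversing conjugator $\sigma$, with care taken that $\sigma$ acts nontrivially on $\mathcal{L}(t_j)$ itself, so some coset-level averaging will be needed to recover the original function $\max_{l \in \mathcal{L}(t_j)} u(l)$.
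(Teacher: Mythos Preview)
Your counterexample for the failure of strict (and hence absolute) convergence is correct and a bit simpler than the paper's; both hinge on a transposition $\pi$, for which the relevant symmetry argument is clean.

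For weak convergence, the obstacle you flag is genuine and, in fact, fatal. You correctly reduce the claim to $\mu\{(u\circ\pi)(t_j) > u(t_j)\} \geq \mu\{u(t_j) > (u\circ\pi)(t_j)\}$ and observe that cyclic invariance under $\langle\pi\rangle$ alone is not enough. The conjugator $\sigma$ with $\sigma\pi\sigma^{-1}=\pi^{-1}$ does not rescue the argument either: under $u\mapsto u\circ\sigma$ the functional $u(t_j)=\max_{\mathcal{L}(t_j)}u$ becomes $\max_{\sigma(\mathcal{L}(t_j))}u$, and since $\sigma$ need not carry $\mathcal{L}(t_j)$ into its own $\langle\pi\rangle$-orbit, no ``coset-level averaging'' recovers the original functional. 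Indeed the inequality you are trying to prove is false. With $\mathbb{O}=\{o_1,o_2,o_3\}$, take the singleton lottery sets $\mathcal{L}(t_j)=\{(\tfrac12,\tfrac13,\tfrac16)\}$ and $\mathcal{L}(t_i)=\{(\tfrac16,\tfrac12,\tfrac13)\}$; the $3$-cycle $\pi=(o_1\,o_2\,o_3)$ gives $\pi(\mathcal{L}(t_j))=\mathcal{L}(t_i)$, so $t_i\succcurlyeq_{p3}t_j$. Here $u(t_i)>u(t_j)$ iff $u(o_1)<\tfrac12\bigl(u(o_2)+u(o_3)\bigr)$, i.e.\ iff $u(o_1)$ lies below the sample mean $\bar u$. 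For a symmetric, regular, almost-surely-non-uniform $\mu$ concentrated near the permutations of $(0,1,1)$ (mixed with a small uniform component for regularity), this event has probability close to $\tfrac13$, so $P_\mu(a_i)\approx\tfrac13<\tfrac12\approx P_\mu(a_j)$, violating weak instrumental convergence of $\succcurlyeq_{p3}$. The paper's proof glosses over exactly this step, asserting without argument that $\max_L u$ beats $\max_{\pi(L)}u$ just as often as the reverse; the same counterexample refutes that assertion whenever the witnessing $\pi$ is not an involution.
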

	
	\begin{proof}
		The fact that $\succcurlyeq_{p3}$ is weakly instrumentally convergent follows from the fact that the distribution from which a randomly generated utility function is drawn must be permutation-invariant. This implies that, for any set $L$ of lotteries and any permutation $\pi$, it is exactly as likely that some lottery in $L$ has greater expected utility than any lottery in $\pi(L)$ as that some lottery in $\pi(L)$ has greater expected utility than any lottery in $L$. Thus, if $\mathcal{L}(t_i) = \pi(\mathcal{L}(t_j))$ for some permutation $\pi$, a random agent is equally likely to opt for $t_i$ or $t_j$. It's then clear that if $\pi(\mathcal{L}(t_j)) \subset \mathcal{L}(t_i)$, a random agent will be \emph{at least} as likely to opt for $t_i$, since adding lotteries to a set can't reduce the maximum expected utility of the lotteries in that set.
		
		On the other hand, the fact that $\pi(\mathcal{L}(t_j)) \subset \mathcal{L}(t_i)$ need not mean that a randomly generated agent is strictly more likely to opt for $t_i$. To see why, consider the decision tree in Figure \ref{fig:linearSpan1}. Here an agent is faced with two options: Going up immediately results in a lottery with a $\frac{1}{2}$ chance of outcome $o_1$, a $\frac{1}{3}$ chance of $o_2$, and a $\frac{1}{6}$ chance of $o_3$. Going down leads to a further choice between two lotteries, one of which gives a $\frac{1}{3}$ chance of $o_1$, a $\frac{1}{2}$ chance of $o_2$, and a $\frac{1}{6}$ chance of $o_3$, while the other gives a $\frac{1}{6}$ chance of $o_1$, a $\frac{2}{3}$ chance of $o_2$, and a $\frac{1}{6}$ chance of $o_3$. (The agent could also randomize to get any mixture of those two lotteries.) The subtree reached by going down is strictly more powerful than the subtree reached by going up, in the sense of $\succ_{p3}$: Under a permutation that switches $o_1$ and $o_2$, the set of lotteries available in the upper subtree (a singleton) becomes a proper subset of the set of lotteries available in the lower subtree. 
		But a random agent is not more likely to go down: If $u(o_1) > u(o_2)$, the agent will go up; if $u(o_2) > u(o_1)$, the agent will go down; and a randomly generated utility function is equally likely to satisfy either inequality. 
		Together with the assumption of randomization in the case of ties, this means that a random agent is equally likely to go up or down, contra strict instrumental convergence.
		
	\end{proof}
	
	\begin{figure}
		\centering
		
		\begin{tikzpicture}
			
			\node (start) [startstop] {};

			\node[align=center] (C1) [chance, right of=start, xshift=1.9cm, yshift=1cm] {};
			
			\node[align=center] (D2) [decision, right of=start, xshift=1.9cm, yshift=-1cm] {};

			\node (O1-1) [end, right of=C1, xshift=1.9cm, yshift=.5cm] {$o_1$};
			
			\node (O2-1) [end, right of=C1, xshift=1.9cm, yshift=0cm] {$o_2$};
			
			\node (O3-1) [end, right of=C1, xshift=1.9cm, yshift=-.5cm] {$o_3$};
			
			\node (C2) [chance, right of=D2, xshift=1.9cm, yshift=.6cm] {};
			
			\node (C3) [chance, right of=D2, xshift=1.9cm, yshift=-.6cm] {};

			\node (O1-2) [end, right of=C2, xshift=1.9cm, yshift=.4cm] {$o_1$};
			
			\node (O2-2) [end, right of=C2, xshift=1.9cm, yshift=0cm] {$o_2$};
			
			\node (O3-2) [end, right of=C2, xshift=1.9cm, yshift=-.4cm] {$o_3$};
			
			\node (O1-3) [end, right of=C3, xshift=1.9cm, yshift=.4cm] {$o_1$};
			
			\node (O2-3) [end, right of=C3, xshift=1.9cm, yshift=0cm] {$o_2$};
			
			\node (O3-3) [end, right of=C3, xshift=1.9cm, yshift=-.4cm] {$o_3$};

			\draw	[thick, black] (start) node[xshift=0.8cm, yshift=-0.5cm] {} -- (D2);
			
			\draw	[thick, black] (start) node[xshift=0.8cm, yshift=-0.5cm] {} -- (C1);

			\draw [arrow, thick, black] (C1) node[xshift=1.7cm, yshift=0.42cm] {\tiny $1/2$} -- (O1-1);
			
			\draw [arrow, thick, black] (C1) node[xshift=1.7cm, yshift=0.1cm] {\tiny $1/3$} -- (O2-1);
			
			\draw [arrow, thick, black] (C1) node[xshift=1.7cm, yshift=-0.19cm] {\tiny $1/6$} -- (O3-1);
			
			\draw [thick, black] (D2) node[xshift=1cm, yshift=0.25cm] {} -- (C2);
			
			\draw [thick, black] (D2) node[xshift=1cm, yshift=0.25cm] {} -- (C3);

			\draw [arrow, thick, black] (C2) node[xshift=1.75cm, yshift=0.38cm] {\tiny $1/3$} -- (O1-2);
			
			\draw [arrow, thick, black] (C2) node[xshift=1.75cm, yshift=0.1cm] {\tiny $1/2$} -- (O2-2);
			
			\draw [arrow, thick, black] (C2) node[xshift=1.75cm, yshift=-0.15cm] {\tiny $1/6$} -- (O3-2);
			
			\draw [arrow, thick, black] (C3) node[xshift=1.75cm, yshift=0.38cm] {\tiny $1/6$} -- (O1-3);
			
			\draw [arrow, thick, black] (C3) node[xshift=1.75cm, yshift=0.1cm] {\tiny $2/3$} -- (O2-3);
			
			\draw [arrow, thick, black] (C3) node[xshift=1.75cm, yshift=-0.15cm] {\tiny $1/6$} -- (O3-3);
			
		\end{tikzpicture}		
		\caption{A case in which $\betterthan_{p3}$ is not strictly instrumentally convergent.} 
		\label{fig:linearSpan1}
	\end{figure}
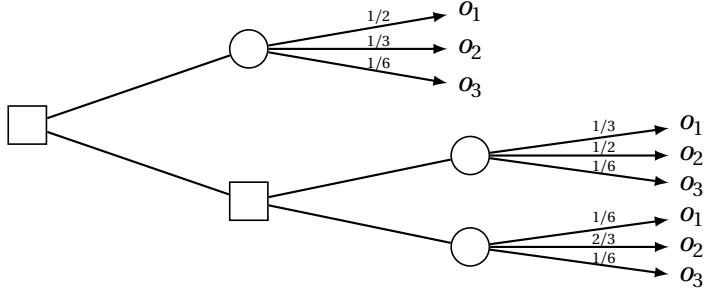
	
	Once again, however, we can guarantee a stronger form of instrumental convergence by replacing simple set inclusion with interiority, as follows:
	
	\begin{description}
		\item[Power 4 (Permuted Interiority)] $t_i \betterthan_{p4} t_j$ iff there is a permutation $\pi$ on $\mathbb{O}$ such that $\pi(\mathcal{L}(t_j))$ is a subset of the interior of $\mathcal{L}(t_i)$.
	\end{description}
	This relation is depicted in Figure \ref{fig:power4pi-interiority}.
	
	\begin{figure}
		\centering
		\begin{tikzpicture}
			\coordinate (O1) at (0,0);
			\coordinate (O2) at (5,0);
			\coordinate (O3) at (2.5,{5*sin(60)});
			
			\draw (O1) -- (O2) -- (O3) -- cycle;
			
			\node[below left] at (O1) {$o_1$};
			\node[below right] at (O2) {$o_2$};
			\node[above] at (O3) {$o_3$};
			
			\coordinate (L1) at (0.8,0.5);
			\coordinate (L2) at (2,0.3);
			\coordinate (L3) at (1.3,1.5);
			
			\coordinate (R1) at (4.2,0.5);
			\coordinate (R2) at (3,0.3);
			\coordinate (R3) at (3.7,1.5);

			\coordinate (P1) at (2.8,0.2);     
			\coordinate (P2) at (4.3,0.4);     
			\coordinate (P3) at (3.8,1.8);     
			\coordinate (P4) at (3.4,2.0);     
			\coordinate (P5) at (3.1,1.1);

			\fill[red!15] (P1) -- (P2) -- (P3) -- (P4) -- (P5) -- cycle;
			\fill[blue!15] (L1) -- (L2) -- (L3) -- cycle;
			
			\draw[red] (P1) -- (P2) -- (P3) -- (P4) -- (P5) -- cycle;

			\draw[blue] (L1) -- (L2) -- (L3) -- cycle;
			\draw (R1) -- (R2) -- (R3) -- cycle;
			
			\node[blue] at (barycentric cs:L1=1,L2=1,L3=1) {\tiny $\mathcal{L}(t_j)$};
			\node (pilabel) at (3.64,0.65) {\tiny $\pi(\mathcal{L}(t_i))$};

			\node[red] (L1label) at (3.58,1.62) {\tiny $\mathcal{L}(t_i)$};
		\end{tikzpicture}
		\caption{An example of the power relation $\betterthan_{p4}$: $\mathcal{L}(t_i)$ (red) includes a permutation of  $\mathcal{L}(t_j)$ (blue) in its interior.}
		\label{fig:power4pi-interiority}
	\end{figure}
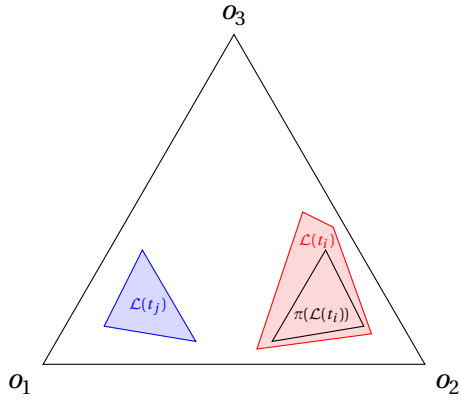
	
	Our final positive result concerns this fourth notion of power:
	
	\begin{restatable}[]{proposition}{ThmPiInteriority}\label{t:IC4}
		$\betterthan_{p4}$ is strictly but not absolutely instrumentally convergent.
	\end{restatable}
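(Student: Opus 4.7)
My plan is to split the proposition into two pieces.

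The negative claim that $\betterthan_{p4}$ is not absolutely instrumentally convergent admits a simple counterexample. Take $\mathcal{L}(t_j) = \{l_j\}$ to be a singleton concentrated near the vertex of the simplex corresponding to outcome $o_1$, let $\pi$ be a transposition with $\pi(o_1)\neq o_1$, and let $\mathcal{L}(t_i)$ be a small neighborhood of $\pi l_j$ in the relative interior of the simplex. Then any utility with $u(o_1)$ close to $1$ and the remaining utilities small strictly prefers $t_j$ to $t_i$, and by regularity this event carries strictly positive probability, so the agent chooses $a_i$ with probability strictly less than $1$.

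For the positive claim, the key step is to apply Proposition \ref{t:IC2} to the pair $(\mathcal{L}(t_i),\,\pi\mathcal{L}(t_j))$. Since $\pi\mathcal{L}(t_j)\subset\mathrm{int}(\mathcal{L}(t_i))$, the interiority argument from that proof gives, for any non-uniform $u$, $u(\mathcal{L}(t_i)) > u(\pi\mathcal{L}(t_j)) = (\pi^{-1}u)(t_j)$. Writing $V := \pi^{-1}U$, which has the same distribution as $U$ by permutation invariance, this reads $U(t_i) > V(t_j)$ almost surely. Letting $A = \{U(t_i) > U(t_j)\}$ and $B = \{U(t_j) > U(t_i)\}$, the chain $U(t_j) > U(t_i) > V(t_j)$ on $B$ yields the inclusion $B \subseteq B^* := \{U(t_j) > V(t_j)\}$. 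When $\pi$ is an involution the pair $(U,V)$ is exchangeable, so $\Pr(B^*)\leq 1/2$, which gives $\Pr(B)\leq 1/2$.

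To upgrade to strict inequality I would exhibit positive probability in $A\cap B^*$. Utilities $u^*$ that are constant on the orbits of $\pi$ satisfy $u^*(\pi l) = u^*(l)$ for every lottery $l$, so a second application of the interiority argument forces $u^*\in A$ strictly; by openness of $A$ and regularity, a full-dimensional ambient neighborhood of $u^*$ sits inside $A$, and suitable perturbations in this neighborhood also lie inside $B^*$. Combining, $\Pr(B) < 1/2$, hence $\Pr(A) > \Pr(B)$, and the strict tie-broken choice probabilities agree accordingly.

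The main technical obstacle is extending the exchangeability step from involutions to permutations of higher order: for $\pi$ of order $n>2$ the pair $(U,\pi^{-1}U)$ is no longer exchangeable, and the bound $\Pr(B^*)\leq 1/2$ is not immediate. Chaining the interiority inequality around the full cyclic orbit $U,\pi U,\ldots,\pi^{n-1}U$ gives $\mathbb{E}[U(t_i)] > \mathbb{E}[U(t_j)]$ without effort, but upgrading this expectation comparison to the required probability comparison $\Pr(A) > \Pr(B)$---presumably via an orbit-by-orbit analysis weighting orbit-level counts against the probability mass each orbit carries---is where the real technical work of the proof lies.
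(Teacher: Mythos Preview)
Your counterexample to absolute convergence and your treatment of the involution case track the paper's argument closely. Both rest on the symmetry $\Pr\bigl(u(t_j)>u(\pi t_j)\bigr)=\Pr\bigl(u(\pi t_j)>u(t_j)\bigr)$ together with the almost-sure bound $u(t_i)>u(\pi t_j)$ from Proposition~\ref{t:IC2}; the paper locates the positive-measure wedge in $A\cap B^*$ by interpolating between two separating utilities (one preferring $t_j$ to $\pi t_j$, the other the reverse) rather than by perturbing a $\pi$-invariant one, but the structure is the same. (One small slip: from $\Pr(B)<\tfrac12$ alone you cannot conclude $\Pr(A)>\Pr(B)$; you also need the companion inclusion $A^*:=\{V(t_j)>U(t_j)\}\subseteq A$, which for an involution gives $\Pr(A)\ge\Pr(A^*)=\Pr(B^*)>\Pr(B)$.)

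Where you stop---extending the symmetry step beyond involutions---the paper does not carry out the orbit analysis you anticipate; it simply invokes Proposition~\ref{t:IC3}, asserting that a random agent is equally likely to prefer $t_j$ to $\pi(t_j)$ as the reverse for \emph{every} $\pi$. Your caution is warranted. Permutation-invariance of the law of $U$ makes $\bigl(U(t_j),U(\pi t_j),\dots,U(\pi^{k-1}t_j)\bigr)$ only cyclically exchangeable, which does not force $\Pr(X_0>X_1)=\Pr(X_1>X_0)$ when $k>2$. With $\mathbb{O}=\{o_1,o_2,o_3\}$, $\pi=(o_1\,o_2\,o_3)$, $L=\operatorname{conv}\{\delta_{o_1},\tfrac12\delta_{o_1}+\tfrac12\delta_{o_2}\}$, and $u$ i.i.d.\ uniform on $[0,1]^3$, a case split over the six orderings of $(u_1,u_2,u_3)$ gives $\Pr\bigl(u(L)>u(\pi L)\bigr)=\tfrac{5}{12}$ versus $\tfrac{7}{12}$. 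Nudging $L$ into the open simplex, setting $\mathcal{L}(t_j)=\pi L$ and taking $\mathcal{L}(t_i)$ to be a thin full-dimensional polytope with $L=\pi^{-1}\mathcal{L}(t_j)$ in its interior, one obtains $t_i\succ_{p4}t_j$ (witnessed by the $3$-cycle $\pi^{-1}$) with a random agent strictly more likely to choose $a_j$. So the obstacle you flagged is not a missing trick: the equal-likelihood step fails for non-involutions, and with it both the paper's proof and the proposition itself as currently stated.
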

	
	\textcolor{black}{The proof of this proposition is longer and less illuminating than the preceding proofs, so I've consigned it to an appendix. The main upshot of Proposition \ref{t:IC4} for our purposes is that we can find a relation more expansive than inclusion, corresponding to the intuitive notion of power as the ability to bring about a wide range of probability distributions over outcomes, that represents a strictly instrumentally convergent goal. We could take this point further, if we wanted to: there are other relations at least somewhat weaker than $\betterthan_{p4}$ that represent strictly convergent goals. But as we will see in the next section, there are limits on how far we can go in ranking decision trees in terms of any notion of power that is strictly or even weakly instrumentally convergent.}

	\section{Power is incomplete}
	\label{s:incompletenessResult}
	
	Taking stock: In the sequential decision-theoretic framework in which we're operating, a natural way of understanding power is that an agent is in a more powerful position when it has the ability to bring about a larger set of lotteries. Different notions of ``larger'' correspond to different ways of ranking situations in terms of power. Of the four power relations we've examined, all are convergent instrumental goals in the weakest sense, one is a convergent instrumental goal in the strongest sense, and three are convergent instrumental goals in the intermediate (``strict'') sense.
	
	\textcolor{black}{A limitation of these results, in terms of their ability to predict the behavior of agents with unknown final goals, is that all of the power relations we have examined are \textit{incomplete}: many pairs of decision trees cannot be ranked in terms of any of the four relations. It's natural to wonder, then, whether we can find a \emph{complete} ranking of decision trees that represents a convergent instrumental goal, in any of the three senses defined in \S \ref{s:instrumentalConvergence}. In particular, is there a numerical measure of the power conferred by any given decision tree such that a random agent is always likely to opt for greater power by this measure---or better yet, such that we can \emph{quantify} the probability of a random agent opting for one subtree rather than another based on their relative power? 
		For example, we might try to measure an agent's power in decision tree $t$ by taking an average over every event $E$ (i.e., every subset of $\mathbb{O}$) of the maximum probability of $E$ across all the lotteries in $\mathcal{L}(t)$---that is, the maximum probability that the agent can give to $E$ by pursuing some strategy in its strategy set. 
		Informally, this measure represents the agent's \textit{average power to bring about any given event}. This would give a complete ranking of decision trees, corresponding at least roughly to the intuitive notion of power. But would it represent a convergent instrumental goal?}
	
	Given the very limited assumptions we have built into the notion of a randomly generated utility function, it turns out that the answer to these questions  is ``no''. Without knowing \emph{something} about an agent's utility functions beyond the minimal assumptions of regularity, symmetry, and non-uniformity, we can't always say which of two subtrees a randomly generated agent is more likely to opt for, let alone how much more likely.
	
	\begin{restatable}[]{proposition}{IncompletenessThm}\label{t:incompleteness}
		There is no total reflexive relation $\atleastasgoodas$ on $\mathcal{T}_\mathbb{O}$ that is weakly instrumentally convergent and whose irreflexive part is strictly instrumentally convergent.
	\end{restatable}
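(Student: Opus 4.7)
The plan is to argue by contradiction: assume such a total $\succsim$ exists, and exhibit a pair of subtrees $t_A, t_B$ together with two distributions on utility functions, both symmetric/regular/almost surely non-uniform, such that a random agent chooses $t_A$ with probability strictly greater than $\tfrac{1}{2}$ under one and strictly less than $\tfrac{1}{2}$ under the other. Such a pair contradicts totality: under the strict part of $\succsim$ either direction requires the probability inequality to be strict and uniform across all admissible distributions, while the indifference case $t_A \sim t_B$ forces weak inequalities in both directions, hence equality, for every admissible distribution.

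The key example I would use relies only on three outcomes $o_1, o_2, o_3 \in \mathbb{O}$. Take $t_A$ to be a trivial tree whose single lottery is $(1,0,0)$ (deterministically $o_1$) and $t_B$ to be a trivial tree whose single lottery is $(0, \tfrac{1}{2}, \tfrac{1}{2})$. Then $\mathcal{L}(t_A)$ and $\mathcal{L}(t_B)$ are singletons, so the random agent's choice between $t_A$ and $t_B$ is decided entirely by the sign of $u(o_1) - \tfrac{1}{2}(u(o_2)+u(o_3))$. What I need to show is that the probability of this quantity being positive depends on the underlying distribution, and in particular can be pushed both above and below $\tfrac{1}{2}$ while remaining compatible with symmetry, regularity, and almost-sure non-uniformity.

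To realize this, I would construct two families of distributions. For $D_1$, place most of the mass in small neighborhoods of the three ``single-spike'' corners $(1,0,0), (0,1,0), (0,0,1)$ of the cube (split equally, so symmetry holds), and mix in a small weight of Lebesgue measure on $[0,1]^{\mathbb{O}}$ for regularity. Near $(1,0,0)$ one has $u_1 > \tfrac{1}{2}(u_2+u_3)$, while near the other two corners the inequality fails, so the agent picks $t_A$ with probability close to $\tfrac{1}{3}$, and by taking the mixing parameter small this stays strictly below $\tfrac{1}{2}$. For $D_2$, do the symmetric thing at the ``double-spike'' corners $(1,1,0), (1,0,1), (0,1,1)$; now two of the three supporting points satisfy $u_1 > \tfrac{1}{2}(u_2+u_3)$ and one does not, giving probability close to $\tfrac{2}{3}$ and in particular strictly above $\tfrac{1}{2}$. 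Both mixtures inherit permutation invariance from the symmetric choice of support and from the Lebesgue mixing component, are regular by construction, and are almost surely non-uniform because the uniform utility functions form a one-dimensional subset of $[0,1]^{\mathbb{O}}$ of measure zero.

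The main obstacle is the bookkeeping needed to verify all three distributional constraints simultaneously and to handle the case $|\mathbb{O}| > 3$. The latter is resolved by taking the support points to be any of the three permutations of $(u_1,u_2,u_3)$ across the chosen three outcomes while distributing the utilities of the remaining outcomes symmetrically (for instance, i.i.d.\ uniform on $[0,1]$, independent of the first three), and then averaging over permutations of all of $\mathbb{O}$ to enforce symmetry on the full cube; this leaves the marginal on $(u(o_1), u(o_2), u(o_3))$ unchanged and hence preserves the desired choice probabilities. Having obtained $D_1$ and $D_2$, the contradiction with each of the three possibilities allowed by totality is immediate, completing the argument.
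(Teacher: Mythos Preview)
Your overall strategy matches the paper's: exhibit two decision trees and two admissible distributions such that a random agent is more likely to choose one tree under the first distribution and the other tree under the second, which rules out all three cases allowed by totality. The paper uses a different pair of trees, each offering an $n$-way choice that involves every outcome in $\mathbb{O}$ (one tree lets the agent eliminate a single outcome and receive a uniform lottery over the rest; the other lets the agent boost a single outcome), together with distributions concentrated near ``easy to please'' utilities (one outcome bad, the rest good) versus ``hard to please'' utilities (one outcome good, the rest bad). Because the paper's trees natively involve all $n$ outcomes, no separate argument is needed for $|\mathbb{O}| > 3$; your singleton-lottery trees are simpler when $n = 3$, but push the work into the generalization step.

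That generalization step contains an error. Averaging a distribution over all permutations of $\mathbb{O}$ does \emph{not} preserve the marginal on a fixed triple $(u(o_1), u(o_2), u(o_3))$: the symmetrization mixes these three coordinates with the remaining $n-3$, so the resulting marginal is an average over all ordered triples of coordinates, not the original marginal. Consequently your appeal to the $n = 3$ choice probabilities is unjustified as written. The construction is salvageable---for instance, concentrate directly near the $n$ permutations of $(1,0,\ldots,0)$ in $[0,1]^n$ for $D_1$ and near the $n$ permutations of $(0,1,\ldots,1)$ for $D_2$ (these are already fully symmetric), and exploit the duality $u \mapsto (1-u(o))_{o \in \mathbb{O}}$, which swaps $D_1$ with $D_2$ and swaps the events $\{u(o_1) > \tfrac12(u(o_2)+u(o_3))\}$ and $\{u(o_1) < \tfrac12(u(o_2)+u(o_3))\}$, so the two choice probabilities sum to $1$ and it suffices to check that one of them differs from $\tfrac12$. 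But that last inequality still needs verification for general $n$, which requires an additional argument; the paper's choice of trees sidesteps this complication entirely.
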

	
	\begin{proof}
		Let $n = |\mathbb{O}|$ denote the cardinality of the set of outcomes. 
		Consider the following decision tree: The agent initially faces two possible actions, $a_i$ and $a_j$, leading to subtrees $t_i$ and $t_j$. Each of those subtrees offers a further $n$-action choice. In $t_i$, the agent can choose one outcome to eliminate, and face a uniform lottery over the remaining outcomes (each having probability $1/(n - 1)$). In $t_j$, the agent can choose to \emph{increase} the probability of any one outcome, and face a lottery that yields that outcome with probability $3/(n+2)$ and every other outcome with probability $1/(n + 2)$. Now suppose, on the one hand, that the agent's utility function is drawn from a distribution that is guaranteed to assign utility 0 to a single outcome and utility 1 to every remaining outcome. Then the agent will prefer $t_i$ (which has an expected utility of 1) over $t_j$ (which has an expected utility of $(n+ 1)/(n+2)$). On the other hand, suppose the agent's utility function is drawn from a distribution that is guaranteed to assign utility 1 to a single outcome and utility 0 to every remaining outcome. Then the agent will prefer $t_j$ (which has an expected utility of $3/(n + 2)$) over $t_i$ (which has an expected utility of $1/(n-1)$).
		
		The distributions described above don't satisfy regularity, but they can be approximated arbitrarily closely be distributions that do, yielding probabilities arbitrarily close to 1 either that a random agent will choose $t_i$ or that a random agent will choose $t_j$.
		
		Any total relation $\atleastasgoodas$ on $\mathcal{T}_\mathbb{O}$ must, by definition, have either $t_i \atleastasgoodas t_j$ or $t_j \atleastasgoodas t_i$. But we have seen that a random agent cannot be guaranteed to be at least as likely to choose $t_i$ as to choose $t_j$, or vice versa, without constraints on the distribution from which its utility function is drawn that go beyond symmetry, regularity, and non-uniformity. So no total relation $\atleastasgoodas$ on $\mathcal{T}_\mathbb{O}$ can be weakly instrumentally convergent. \textcolor{black}{Likewise, any irreflexive relation $\betterthan$ on $\mathcal{T}_\mathbb{O}$ that ranks pairs like $t_i$ and $t_j$ will not be strictly (or absolutely) instrumentally convergent.}
	\end{proof}
	
	The basic observation behind Proposition \ref{t:incompleteness} is that, in some situations, which path a random agent is more likely to follow depends on features of the distribution over utility functions that aren't pinned down by the assumptions of symmetry, regularity, and non-uniformity. This reflects, at least in part, the fact that different sorts of utility functions will exhibit preferences for different and not-easily-comparable sorts of power. An ``easy to please'' agent who finds most outcomes near-maximally good will particularly desire the power to protect itself against a few worst-case outcomes. A ``hard to please'' agent who finds most outcomes near-maximally bad will particularly desire the power to increase the probabilities of the few outcomes it considers highly desirable. Similarly, an agent whose utilities follow a low-variance unimodal distribution, with most utilities tightly clustered and only a few outliers, will want the power to selectively influence the probabilities of those few extreme outcomes, 
	while an agent whose utilities are highly bimodal (e.g., clustered equally near 0 and 1) 
	will mainly want the power to ensure an outcome in the upper half of the distribution, without assigning much special significance to the \textit{very} best or worst outcomes. Thus, predicting which path an agent is more likely to follow in a sequential choice situation can require fairly substantive information about the process or distribution from which its utility function is generated.

	The lesson is that, while instrumental convergence arguments in general and convergent power-seeking in particular may allow us to predict agents' behavior in \emph{some} situations even from a standpoint of extreme ignorance about the agent's final goals, there will be other situations in which those arguments fall silent. 
	\textcolor{black}{And perhaps more to the point, any notion of \emph{power} derived purely from first principles or conceptual analysis, without reference to the particular process by which the utility functions of a particular class of agents are generated, will be either incomplete (telling us that some pairs of situations can't be compared in terms of power) or non-predictive (not constituting a convergent instrumental goal as we've defined it).}

	\section{Absolute power}
	\label{s:absolutepower}
	
	Where does this leave the idea of instrumentally convergent power-seeking? It's a bit of a mixed result. There is certainly some truth to the claim that power is a convergent instrumental goal, on certain natural ways of characterizing power. But this fact \emph{might} turn out to have limited predictive utility, since real-world alternatives will not always be ranked in terms of the relevant power relations. When they are not, considerations of instrumental convergence may not tell us anything about the likely behavior of future agents with radically unknown final goals.
	
	The positive results in \S \ref{s:power} will be more informative, however, about the behavior of agents who have the opportunity to achieve \emph{absolute} or near-absolute power. Absolute power is the ability to bring about any outcome with probability 1. Absolute power in this sense has maximal expected utility for any expected utility maximizer, and is ranked strictly above any other situation an agent could find itself in by the power relations $\succ_{p1}$ and $\succ_{p3}$. (In the relations $\succ_{p2}$ and $\succ_{p4}$, it is ranked above any situation in which there is no event---that is, no set of outcomes---that the agent can bring about with certainty.) Thus, an expected utility maximizer with non-uniform preferences will weakly prefer absolute power to any other situation it could find itself in, strictly prefer it to most alternative situations, and potentially be willing to incur large costs or risks to achieve absolute power if the alternative is a situation in which it has very limited power to influence the course of events or bring about high-utility outcomes.
	
	Of course, like the relations $\betterthan_{p1}$ and $\betterthan_{p2}$, absolute power is an idealization that is very unlikely to be realized in practice---arguably impossible, if outcomes are individuated finely enough, or if empirical certainties are rationally prohibited. But both the expected utility of a decision subtree for a given agent and the probability of a random agent choosing one subtree rather than another are continuous with respect to the sets of lotteries available at the relevant subtrees, so \emph{nearly} absolute power (the ability to bring about \emph{nearly} any outcome with \emph{near}-certainty) will be nearly as attractive to expected utility maximizers as fully absolute power.

	Thus, if you expect that some future AI agents will have superhuman capabilities that give them a good shot at achieving absolute or near-absolute power, then the incompleteness of power relations like $\betterthan_{p1}$--$\betterthan_{p4}$ may not do much to undermine the predictiveness of instrumental convergence. Absolute or near-absolute power will be tempting for most expected utility maximizers, unless the prospects they can expect if they don't seek power are already very desirable, or the costs of power-seeking are very high. On the other hand, suppose you expect future AI agents to be embedded in a complex multipolar world, much like the present-day human world, where even the most powerful individuals cannot aspire to anything like absolute power, and the pursuit of power typically carries great risks and forecloses certain desirable options (like a quiet, carefree, private life). In that case, the fact that power is a convergent instrumental goal may end up telling us relatively little about the likely behavior of those future agents.
	
	\section{Conclusion}
	\label{s:conclusion}

	I will close by noting a few of the important questions this paper has left unaddressed.
	
	First, apart from Proposition \ref{t:IC2}, which establishes that $\betterthan_{p2}$ is absolutely instrumentally convergent, I have not made any quantitative claims about \emph{how much} more likely a random agent is to choose more power over less power. In general, such quantitative claims will require precise assumptions about the distribution from which the agent's utility function is drawn. If we assume, for instance, that the utility of each outcome is drawn independently from a uniform distribution on $[0,1]$, then in any given choice situation, it is possible to calculate the probability that a random agent will make any particular choice or sequence of choices. I will leave these quantitative exercises for future research. In a certain sense, in the context of worrying about catastrophic risks from advanced AI, the precise probabilities are beside the point: a 50/50 chance that the first superintelligent AI agent will exhibit power-seeking tendencies would be more than enough to worry about! What would be most interesting is to calculate the probabilities that such an agent would choose to seek power when it has only a limited chance of success, and must pay some high cost if it fails. 
	The challenge is to develop well-motivated assumptions about the particular probability distribution from which this agent's utility function would be drawn, and about the costs or risks it would have to incur to seek power over humanity.
	
	Second, I haven't considered the possibility that AI agents might have ``non-consequentialist'' preferences regarding their own actions. (I put ``non-consequentialist'' in scare quotes because it's not clear whether these preferences must be non-consequentialist in any strict sense.) Intuitively, an agent might have an \emph{aversion to power-seeking}, and so be unlikely to engage in power-seeking behavior even when it maximizes expected utility. Allowing for such preferences would, I think, prevent us from claiming that any goal is \emph{absolutely} instrumentally convergent: We can't know for certain that an agent will take any particular action in any particular situation if they might have an arbitrarily strong aversion to that particular kind of action. It's not clear, though, that the possibility of non-consequentialist preferences should make any difference to claims of weak or strict instrumental convergence, as long as we maintain the assumption of \emph{ignorance} about an agent's basic preferences. Under that assumption, it's natural to suppose, an intrinsic aversion to power-seeking is no more or less likely than an intrinsic \emph{attraction} to power-seeking. So while instrumental convergence will have less predictive power for agents whose choices are based partly on non-instrumental considerations, such agents might still be on balance \emph{more} likely to choose options that put them in more powerful positions. But fully investigating the implications of non-consequentialist preferences for instrumental convergence arguments would require a paper unto itself.
	
	Third, I haven't said anything about  the relationship between an agent seeking power for itself and seeking to \emph{dis}empower \emph{other} agents. To what extent, and in what circumstances, is power rivalrous or zero-sum, such that the former goal implies the latter? Power as I have characterized it in this paper is not inherently rivalrous. For instance, suppose a group of us are ordering pizza together, that we all happen to prefer mushrooms and green peppers, but that we're all such meek, deferential souls that, if any of us were to suggest a different combination of toppings, the others would go along with it. In this situation, in the sense of power I have been concerned with, \emph{each} of us has absolute power: Each of us \emph{could}, given the actual dispositions of everyone else in the room, bring about any possible outcome with certainty (or, by randomizing our own suggestion, any lottery over outcomes). But it seems plausible that, when agents want different things and are not willing to defer to each other's preferences, power must be at least partly rivalrous: I have the power to get the outcome I want only if you lack the power to prevent it, and vice versa. Under these circumstances, an agent who seeks power will also, at least to some extent, seek to disempower other agents. It would be valuable to investigate these questions formally and identify the circumstances under which disempowerment of other agents is a convergent instrumental goal.
	
	Finally, and most importantly, I should reiterate that my aim in this paper has not been to give a complete defense of the claim that instrumentally convergent power-seeking makes advanced AI agents a catastrophic or existential threat to humanity. In particular, I haven't said anything in defense of the orthogonality thesis or the claim that aligning future AI agents will be very difficult. So I haven't made any argument that we should regard the final goals of actual future AI agents 
	as randomly generated or anything close to randomly generated. 
	For all I've said, it could turn out that these agents will predictably have values that make them disinclined to power-seeking in real-world circumstances, or that are well enough aligned with human values that any power they acquire would be benign.
	All I have tried to establish in this paper is that 
	the abstract thesis that power is a convergent instrumental goal involves a non-trivial grain of truth.\ifanon\else\footnote{For helpful feedback on earlier versions of this paper, I am grateful to Adam Bales, Simon Goldstein, Harry Lloyd, Andreas Mogensen, Bradford Saad, Nate Sharadin, Elliott Thornley, David Thorstad, and participants in the 2024 ``Language, Mind, and Ethics in AI'' workshop at UT Austin.}\fi
	
	\ifanon
	\vspace{\baselineskip}
	\noindent \textit{Competing interests: The author(s) have no relevant financial or non-financial interests to disclose.} 
	\else \fi
	
	\appendix
	
	\section{Appendix: Proof of Proposition 4}
	
	\ThmPiInteriority*

	\begin{proof}
		It is easy to see that $\betterthan_{p4}$ is not absolutely instrumentally convergent. Suppose, for instance, that choosing $a_i$ leads to a further choice between certainty of outcome $o_1$ or certainty of outcome $o_2$, while $a_j$ leads to a 50/50 lottery over outcomes $o_3$ and $o_4$. Then $\pi(\mathcal{L}(t_j))$ is a subset of the interior of $\mathcal{L}(t_i)$ under a permutation $\pi$ that switches $o_1$ with $o_3$ and $o_2$ with $o_4$. But $a_j$ will be chosen by an agent who prefers both $o_3$ and $o_4$ to $o_1$ and $o_2$, and regularity guarantees that utility functions with that feature have non-zero probability.
		
		Next we will show that $\betterthan_{p4}$ is strictly instrumentally convergent. First, if $\mathcal{L}(t_j) \subset \mathcal{L}(t_i)$, then a random agent is more likely to opt for $t_i$ by Proposition \ref{t:IC1}. So we can focus on the case where $\mathcal{L}(t_j) \not \subset \mathcal{L}(t_i)$. Since $\pi(\mathcal{L}(t_j)) \subset \mathcal{L}(t_i)$, this means that $\mathcal{L}(t_j) \not \subseteq \pi(\mathcal{L}(t_j))$. This implies, conversely, that $\pi(\mathcal{L}(t_j)) \not \subseteq \mathcal{L}(t_j)$. (If one set were a proper subset of the other, then given that both sets are closed, one would have greater Lebesgue measure than the other. But permutations are measure-preserving.)
		It then follows from the hyperplane separation theorem that there is a utility function $u_j$ that assigns some lottery in $\mathcal{L}(t_j)$ greater expected utility than any lottery in $\pi(\mathcal{L}(t_j))$, and likewise a utility function $u_{\pi j}$ that assigns some lottery in $\pi (\mathcal{L}(t_j))$ greater expected utility than any lottery in $\mathcal{L}(t_j)$.
		
		Now, consider the set of convex combinations of $u_j$ and $u_{\pi j}$: that is, the set of utility functions $u_\alpha$ such that for any outcome $o \in \mathbb{O}$, $u_\alpha(o) = \alpha u_j(o) + (1 - \alpha)u_{\pi j}(o)$, for some $\alpha \in [0,1]$. Clearly $u_\alpha(t_j) > u_\alpha(\pi(t_j))$ 
		for large enough values of $\alpha$, and $u_\alpha(\pi(t_j)) > u_\alpha(t_j)$
		for small enough values of $\alpha$. Moreover, the continuity of expected utility guarantees that there is an $\alpha^*$ such that $u_{\alpha^*}(t_j) = u_{\alpha^*}(\pi(t_j))$. But because $\pi(\mathcal{L}(t_j))$ belongs to the interior of $\mathcal{L}(t_i)$, any non-uniform $u_{\alpha^*}$ will strictly prefer some $l \in \mathcal{L}(t_i)$ to any $l \in \mathcal{L}(t_j)$: We can slightly improve the most-preferred lottery in $\pi (\mathcal{L}(t_j))$ (moving some probability from the lowest-utility outcome in the support of the lottery to a higher-utility outcome, for instance) while remaining inside $\mathcal{L}(t_i)$. (This depends on $u_{\alpha^*}$ being non-uniform. But given at least three outcomes in $\mathbb{O}$, we can choose utility functions $u_j$ and $u_{\pi j}$ that don't have a uniform function in their convex hull.) Having found an $l \in \mathcal{L}(t_i)$ such that $u_{\alpha^*}(l) > u_{\alpha^*}(t_j)$, this inequality will also hold for small enough perturbations of $u_{\alpha^*}$ (by the same reasoning used in the proof of Proposition \ref{t:IC1}). And this will include some utility functions that strictly prefer $t_j$ to $\pi(t_j)$---for instance, $u_\alpha$ for values of $\alpha$ slightly larger than $\alpha^*$, and any small enough perturbations of those $u_\alpha$.
		
		This means that there is a set of utility functions, with positive Lebesgue measure, that strictly prefer $t_i$ to $t_j$ and $t_j$ to $\pi(t_j)$. From Proposition \ref{t:IC3}, we know that a randomly generated utility function is equally likely to prefer $t_j$ over $\pi(t_j)$ as to prefer $\pi(t_j)$ over $t_j$. The fact that a positive-measure set of utility functions that prefer $t_j$ over $\pi(t_j)$ also prefer $t_i$ over $t_j$ breaks that equality, implying (given regularity) that a randomly generated utility function is strictly more likely to prefer $t_i$ over $t_j$ than the reverse.
		
	\end{proof}
	
	\bibliography{icbib}
\end{document}